\newcommand{\game}{G}
\newcommand{\states}{S}
\newcommand{\informationstates}{\mathcal{H}}
\newcommand{\act}{\mathcal{A}}
\newcommand{\trans}{\delta}
\newcommand{\obs}{\mathcal{Z}}
\newcommand{\obsmap}{\mathcal{O}}
\newcommand{\distr}{\mathcal{D}}
\newcommand{\powset}{\mathcal{P}}
\newcommand{\cost}{\mathsf{c}}
\newcommand{\almost}{\mathsf{Almost}}
\newcommand{\set}[1]{\{#1\}}
\newcommand{\straa}{\sigma}
\newcommand{\restr}{\!\!\upharpoonright}
\newcommand{\supp}{\mathrm{Supp}}
\newcommand{\prb}{\mathbb{P}}
\newcommand{\good}{\mathsf{good}}
\newcommand{\bad}{\mathsf{bad}}
\newcommand{\targetstate}{\mathsf{target}}
\newcommand{\last}{\mathsf{Last}}
\newcommand{\belief}{\mathcal{B}}
\newcommand{\beliefset}{\mathsf{Belief}}
\newcommand{\beliefsetwin}{\mathsf{Belief}_{\mathsf{Win}}}
\newcommand{\wh}{\widehat}
\newcommand{\allow}{{\textsf{Allow}}}
\newcommand{\reach}{{\mathsf{Reach}}}
\newcommand{\sumcost}{{\mathsf{Total}}}
\newcommand{\Cone}{\mathsf{Cone}}
\newcommand{\update}{{\mathsf{Update}}}
\newcommand{\val}{\mathsf{Val}}
\newcommand{\Reach}{\mathsf{Reach}}
\newcommand{\ov}{\overline}
\newcommand{\target}{T}
\newcommand{\wb}{\overline}
\newcommand{\reals}{\mathbb{R}}
\newcommand{\integer}{\mathbb{Z}}
\newcommand{\nat}{\mathbb{N}}
\newcommand{\expect}{\mathbb{E}}
\newcommand{\optCost}{\mathsf{optCost}}
\newcommand{\upper}{\mathcal{U}_{\allow}}
\newcommand{\cale}{\mathcal{E}}
\newcommand{\Exp}{\mathbb{E}}
\newcommand{\automaton}{\mathsf{P}}
\tikzstyle{Player1}=[circle, thick, minimum size=1.0cm, inner sep=0cm,draw=black]
\tikzstyle{State}=[circle, thick, minimum size=0.6cm, inner sep=0cm,draw=black]
\tikzstyle{Final}=[circle, accepting, thick, minimum size=0.6cm, inner sep=0cm,draw=black]
\tikzstyle{RState}=[circle, very thick, minimum size=0.8cm, inner sep=0cm,draw=red]
\title{Optimal Cost Almost-sure Reachability in POMDPs\thanks{The research was partly supported by Austrian Science Fund (FWF) Grant No P23499-N23, 
FWF NFN Grant No S11407-N23 (RiSE), ERC Start grant (279307: Graph Games), and 
Microsoft faculty fellows award.}\\ (Full Version)}
\author{Krishnendu Chatterjee$^1$ \and Martin Chmel\'ik$^1$ \and Raghav Gupta$^2$ \and Ayush Kanodia$^2$}
\institute{
$^1$ IST Austria, Klosterneuburg, Austria  \\
$^2$ IIT Bombay, India  \\}
\begin{document}

\maketitle

\begin{abstract}
We consider partially observable Markov decision processes (POMDPs)
with a set of target states and every transition is associated with an 
integer cost.
The optimization objective we study asks to minimize the expected total 
cost till the target set is reached, while ensuring that the target set
is reached almost-surely (with probability~1).
We show that for integer costs approximating the optimal cost is undecidable.
For positive costs, our results are as follows: (i)~we establish matching 
lower and upper bounds for the optimal cost and the bound is double exponential; 
(ii)~we show that the problem of approximating the optimal cost is 
decidable and present approximation algorithms developing on the existing 
algorithms for POMDPs with finite-horizon objectives. 
While the worst-case running time of our algorithm is double exponential,
we also present efficient stopping criteria for the algorithm and show 
experimentally that it performs well in many examples of interest.
\end{abstract}

\section{Introduction}

\noindent{\bf Partially observable Markov decision processes (POMDPs).}
\emph{Markov decision processes (MDPs)} are standard models for 
probabilistic systems that exhibit both probabilistic as well as 
nondeterministic behavior~\cite{Howard}.
MDPs are widely used to model and solve control problems for stochastic 
systems~\cite{FV97,Puterman}: 
nondeterminism represents the freedom of the controller to choose a 
control action, while the probabilistic component of the behavior describes the 
system response to control actions. 
In \emph{perfect-observation (or perfect-information) MDPs} the 
controller observes the current state of the system precisely to choose the 
next control actions, whereas in \emph{partially observable MDPs (POMDPs)} the 
state space is partitioned according to observations that the controller can 
observe, i.e., given the current state, the controller can only view the observation 
of the state (the partition the state belongs to), but not the precise 
state~\cite{PT87}.
POMDPs provide the appropriate model to study a wide variety of applications 
such as in computational biology~\cite{Bio-Book}, 
speech processing~\cite{Mohri97}, image processing~\cite{IM-Book}, 
robot planning~\cite{KGFP09,kaelbling1998planning}, reinforcement learning~\cite{LearningSurvey}, 
to name a few. 
POMDPs also subsume many other powerful computational models such as 
probabilistic finite automata (PFA)~\cite{Rabin63,PazBook} 
(since probabilistic finite automata (aka blind POMDPs) are a special case of 
POMDPs with a single observation).

\smallskip\noindent{\bf Classical optimization objectives.}
In stochastic optimization problems related to POMDPs, 
the transitions in the POMDPs are associated with integer costs, 
and the two classical objectives that have been widely studied are 
\emph{finite-horizon} and \emph{discounted-sum} objectives~\cite{FV97,Puterman,PT87}.
For finite-horizon objectives, a finite length $k$ is given and the goal is 
to minimize the expected total cost for $k$ steps.
In {discounted-sum} objectives, the cost in the $j$-th step is multiplied
by $\gamma^j$, for $0< \gamma < 1$, and the goal is to minimize the expected 
total discounted cost over the infinite horizon.

\smallskip\noindent{\bf Reachability and total-cost.} 
In this work we consider a different optimization objective 
for POMDPs. 
We consider POMDPs with a set of target states, and the 
optimization objective is to minimize the expected total cost 
till the target set is reached. 
First, note that the objective is not the discounted sum, but the
total sum without discounts. 
Second, the objective is not a finite-horizon objective, as there 
is no bound apriori known to reach the target set, and along 
different paths the target set can be reached at different time 
points. 
The objective we consider is very relevant in many control 
applications such as in robot planning: for example, the robot 
has a target or goal;  and the objective is to minimize the number 
of steps to reach the target, or every transition is associated with 
energy consumption and the objective is to reach the target with minimal
energy consumption.

\smallskip\noindent{\bf Our contributions.}
In this work we study POMDPs with a set of target states, and costs
in every transition, and the goal is to minimize the expected total 
cost till the target set is reached, while ensuring that the target set 
is reached almost-surely (with probability~1).
Our results are as follows:
\begin{compactenum}
\item \emph{(Integer costs).} 
We first show that if the transition costs are integers, then approximating 
the optimal cost is undecidable.

\item \emph{(Positive integer costs).} Since the problem is undecidable for
integer costs, we next consider that costs are positive integers.
We first remark that if the costs are positive, and there is a positive 
probability not to reach the target set, then the expected total cost 
is infinite. 
Hence the expected total cost is not infinite only by ensuring that the 
target is reached almost-surely.
First we establish a double-exponential lower and upper bound for the 
expected optimal cost.
We show that the approximation problem is decidable, and present approximation
algorithms using the well-known algorithms for finite-horizon objectives.

\item \emph{(Implementation).} 
Though we establish that in the worst-case the algorithm requires double-exponential time,
we also present efficient stopping criteria for the algorithm, and 
experimentally show that the algorithm is efficient in several practical examples.
We have implemented our approximation algorithms developing on the existing 
implementations for finite-horizon objectives, and present experimental results 
on a number of well-known examples of POMDPs.

\end{compactenum}

\smallskip\noindent{\em Comparison with Goal-POMDPs.} 
While there are several works for discounted POMDPs~\cite{KHL08,SS04,PGT03}, as mentioned above
the problem we consider is different from discounted POMDPs. 
The most closely related works are Goal-MDPs and POMDPs~\cite{BG09,KMWG11}. The key differences are as follows:
(a)~our results for approximation apply to all POMDPs with positive integer costs,
whereas the solution for Goal-POMDPs apply to a strict subclass of POMDPs 
(see Remark~\ref{rem:almostsure}); and (b)~we present asymptotically tight (double exponential) 
theoretical bounds on the expected optimal costs.

\newcommand{\C}{\mathcal{C}}
\newcommand{\initd}{\lambda}

\section{Definitions}
We present the definitions of POMDPs, strategies, objectives, 
and other basic notions required for our results.
Throughout this work, we follow standard notations from~\cite{Puterman,LittmanThesis}.

\smallskip\noindent{\bf Notations.}
Given a finite set $X$, we denote by $\powset(X)$ the set of subsets of $X$,
i.e., $\powset(X)$ is the power set of $X$.
A probability distribution $f$ on $X$ is a function $f:X \to [0,1]$ such 
that $\sum_{x\in X} f(x)=1$, and we denote by  $\distr(X)$ the set of 
all probability distributions on $X$. For $f \in \distr(X)$ we denote by $\supp(f)=\set{x\in X \mid f(x)>0}$
the support of $f$.

\smallskip\noindent{\bf POMDPs.}
A \emph{Partially Observable Markov Decision Process (POMDP)} is a 
tuple $\game=(S,\act,\trans,\obs,\obsmap,\initd_0)$ where:
(i)~$S$ is a finite set of states; (ii)~$\act$ is a finite alphabet of \emph{actions};
 (iii)~$\trans:S\times\act \rightarrow \distr(S)$ is a 
 \emph{probabilistic transition function} that given a state $s$ and an
 action $a \in \act$ gives the probability distribution over the successor 
 states, i.e., $\trans(s,a)(s')$ denotes the transition probability from 
 $s$ to $s'$ given action $a$; 
 (iv)~$\obs$ is a finite set of \emph{observations}; 
 (v)~$\obsmap:S\rightarrow \obs$ is an \emph{observation function} that 
  maps every state to an observation; and 
 (vi)~$\initd_0$ is a probability distribution for the initial state,
and for all $s,s'\in \supp(\initd_0)$ we require that $\obsmap(s)=\obsmap(s')$.
If the initial distribution is Dirac, we often write $\initd_0$ as $s_0$ 
where $s_0$ is the unique starting (or initial) state.
Given $s,s'\in S$ and $a\in\act$, we also write $\trans(s'|s,a)$ for 
$\trans(s,a)(s')$.
A state $s$ is \emph{absorbing} if for all actions $a$ we have 
$\trans(s,a)(s)=1$ (i.e., $s$ is never left from $s$).
For an observation $z$, 
we denote by $\obsmap^{-1}(z)=\set{s \in S \mid \obsmap(s)=z}$
the set of states with observation $z$.
For a set $U \subseteq S$ of states and $Z \subseteq \obs$ of observations 
we denote 
$\obsmap(U)=\set{z \in \obs \mid \exists s \in U. \ \obsmap(s)=z}$ 
and $\obsmap^{-1}(Z)= \bigcup_{z \in Z} \obsmap^{-1}(z)$.
A POMDP is a \emph{perfect-observation (or perfect-information) MDP} 
if each state has a unique observation. 

\smallskip\noindent{\bf Plays and cones.} 
A \emph{play} (or a path) in a POMDP is an infinite sequence $(s_0,a_0,s_1,a_1,s_2,a_2,\ldots)$ 
of states and actions such that for all $i \geq 0$ we have $\trans(s_i,a_i)(s_{i+1})>0$ and
$s_0 \in \supp(\initd_0)$.  
We write $\Omega$ for the set of all plays.
For a finite prefix $w \in (S\cdot A)^* \cdot S$ of a play, we denote by 
$\Cone(w)$ the set of plays with $w$ as the prefix (i.e., the cone or cylinder 
of the prefix $w$), and denote by $\last(w)$ the last state of $w$.

\smallskip\noindent{\bf Belief-support and belief-support updates.}
For a finite prefix $w=(s_0,a_0,s_1,a_1,\ldots,s_n)$ we denote by 
$\obsmap(w)=(\obsmap(s_0),a_0,\obsmap(s_1),a_1,\ldots,\obsmap(s_n))$ 
the observation and action sequence associated with $w$.
For a finite sequence $\rho=(z_0,a_0,z_1,a_1,\ldots,z_n)$ of observations and 
actions, the \emph{belief-support} $\belief(\rho)$ after the prefix $\rho$ is 
the set of states in which a finite prefix of a play can be after the sequence 
$\rho$ of observations and actions, 
i.e., $$\belief(\rho)=\set{s_n=\last(w) \mid w=(s_0,a_0,s_1,a_1,\ldots,s_n), w \mbox{
is a prefix of a play, and for } \\ \text{all } 0\leq i \leq n. \; \obsmap(s_i)=z_i}.$$ 
The belief-support updates associated with finite-prefixes are as follows:
for prefixes $w$ and $w'=w \cdot a \cdot s$ the belief-support update is 
defined inductively as 
$\belief(\obsmap(w')) = 
\left(\bigcup_{s_1 \in \belief(\obsmap(w))} \supp(\trans(s_1,a)) \right)
\cap \obsmap^{-1}(\obsmap(s))$,
 i.e., the set
 $\left(\bigcup_{s_1 \in \belief(\obsmap(w))} \supp(\trans(s_1,a)) \right)$
 denotes the possible successors given the belief-supprt 
 $\belief(\obsmap(w))$ and action $a$, and then the intersection with the set 
 of states with the current observation $\obsmap(s)$ gives the new 
 belief-support set.

\smallskip\noindent{\bf Strategies (or policies).}
A \emph{strategy (or a policy)} is a recipe to extend prefixes of plays and 
is a function $\sigma: (S\cdot A)^* \cdot S \to \distr(A)$ that given a finite 
history (i.e., a finite prefix of a play) selects a probability distribution 
over the actions.
Since we consider POMDPs, strategies are \emph{observation-based}, i.e., 
for all histories $w=(s_0,a_0,s_1,a_1,\ldots,a_{n-1},s_n)$ and 
$w'=(s_0',a_0,s_1',a_1,\ldots,a_{n-1},s_n')$ such that for all 
$0\leq i \leq n$ we have $\obsmap(s_i)=\obsmap(s_i')$ (i.e., $\obsmap(w) = \obsmap(w')$), we must have 
$\sigma(w)=\sigma(w')$.
In other words, if the observation sequence is the same, then the strategy 
cannot distinguish between the prefixes and must play the same. 
A strategy $\straa$ is \emph{belief-support based stationary} if it depends 
only on the current belief-support, i.e., 
whenever for two histories $w$ and $w'$, we have
$\belief(\obsmap(w)) = \belief(\obsmap(w'))$, then $\straa(w) = \straa(w')$.

\smallskip\noindent{\bf Strategies with memory and finite-memory strategies}
 A \emph{strategy} with memory is a tuple $\sigma=(\sigma_u,\sigma_n,M,m_0)$ 
 where:(i)~\emph{(Memory set).} $M$ is a denumerable set (finite or infinite) of memory elements (or memory states).
 (ii)~\emph{(Action selection function).} The function $\sigma_n:M\rightarrow \distr(\act)$ is the 
 	\emph{action selection function} that given the current memory 
 	state gives the probability distribution over actions.
 (iii)~\emph{(Memory update function).} The function $\sigma_u:M\times\obs\times\act\rightarrow \distr(M)$ 
  is the \emph{memory update function} that given the current memory state, 
  the current observation and action, updates the memory state probabilistically.
 (iv)~\emph{(Initial memory).} The memory state $m_0\in M$ is the initial memory state.
 A strategy is a \emph{finite-memory} strategy if the set $M$ of memory elements is finite.
 A strategy is \emph{pure (or deterministic)} if the memory update function 
 and the action selection function are deterministic, i.e., 
 $\sigma_u: M \times \obs \times \act \to M$ and $\sigma_n: M \to \act$.
The general class of strategies is sometimes referred to as the class of \emph{randomized infinite-memory} strategies.

\smallskip\noindent{\bf Probability and expectation measures.}
Given a strategy $\sigma$ and a starting state $s$, the unique probability measure obtained given 
$\sigma$ is denoted as $\prb_s^{\sigma}(\cdot)$.
We first define the measure $\mu_s^\sigma(\cdot)$ on cones.
For $w=s$ we have $\mu_s^\sigma(\Cone(w))=1$, and 
for $w=s'$ where $s\neq s'$ we have  $\mu_s^\sigma(\Cone(w))=0$; and 
for $w' = w \cdot a\cdot s$ 
we have 
$\mu_s^\sigma(\Cone(w'))= \mu_s^\sigma(\Cone(w)) \cdot \sigma(w)(a) \cdot \trans(\last(w),a)(s)$. 
By Carath\'eodory's extension theorem, the function $\mu_s^\sigma(\cdot)$
can be uniquely extended to a probability measure $\prb_s^{\sigma}(\cdot)$
over Borel sets of infinite plays~\cite{Billingsley}. 
We denote by $\expect_s^{\sigma}[\cdot]$ the expectation measure associated with the strategy $\straa$.
For an initial distribution $\initd_0$ we have 
$\prb_{\initd_0}^\straa(\cdot) = \sum_{s\in S} \initd_0(s) \cdot \prb_{s}^\straa(\cdot)$
and $\expect_{\initd_0}^\straa[\cdot] = 
\sum_{s\in S} \initd_0(s) \cdot \expect_{s}^\straa[\cdot]$.

\smallskip\noindent{\bf Objectives.}
We consider reachability and total-cost objectives.
\begin{itemize}
\item \emph{Reachability objectives.} 
A \emph{reachability objective} in a POMDP $G$ is a measurable set $\varphi \subseteq \Omega$ 
of plays and is defined as follows: 
given a set $\target \subseteq S$ of \emph{target} states, the \emph{reachability} objective 
$\Reach(\target) = \{ (s_0, a_0, s_1, a_1, s_2 \ldots) \in \Omega \mid 
\exists i \geq 0:  s_i \in \target\}$
requires that a target state in $\target$ is visited at least once. 

\item \emph{Total-cost and finite-length total-cost objectives.}
A \emph{total-cost} objective is defined as follows: 
Let $G$ be a POMDP with a set of absorbing target states $\target$ and a 
\emph{cost} function $\cost : \states \times \act \rightarrow \integer$ 
that assigns integer-valued weights to all states and actions 
such that for all states $t \in \target$ and all actions $a \in \act$ we have $\cost(t,a) =0$.
The total-cost of a play $\rho = (s_0,a_0,s_1,a_1,s_2,a_2,\ldots)$ is 
$\sumcost(\rho)  = \sum_{i=0}^{\infty}\cost(s_i,a_i)$ the sum of the costs of the play.
To analyze total-cost objectives we will also require finite-length total-cost objectives,
that for a given length $k$ sum the total costs upto length $k$; i.e.,
$\sumcost_k(\rho) =  \sum_{i=0}^{k}\cost(s_i,a_i)$.

\end{itemize}




\smallskip\noindent{\bf Almost-sure winning.}
Given a POMDP $\game$ with a reachability objective $\reach(\target)$ a strategy~$\straa$
is \emph{almost-sure winning} iff $\prb^{\straa}_{\initd_0}(\reach(\target)) = 1$. 
We will denote by $\almost_{\game}(\target)$ the set of almost-sure winning strategies in 
POMDP $\game$ for the objective $\reach(\target)$.
Given a set $U$ such that all states in $U$ have the same observation, a strategy is 
almost-sure winning from $U$, if given the uniform probability distribution $\initd_U$ 
over $U$ we have $\prb_{\initd_U}^\straa(\Reach(T))=1$; i.e., the strategy ensures almost-sure 
winning if the starting belief-support is $U$.

\smallskip\noindent{\bf Optimal cost under almost-sure winning and 
approximations.}
Given a POMDP $\game$ with a reachability objective $\reach(\target)$ and a cost function $\cost$ 
we are interested in minimizing the expected total cost before reaching the target set $\target$,
while ensuring that the target set is reached almost-surely. 
Formally, the value of an almost-sure winning strategy $\straa \in \almost_{\game}(\target)$ is 
the expectation $\val(\straa) = \expect_{\initd_0}^{\straa}[\sumcost]$. 
The \emph{optimal cost} is defined as the infimum of expected costs among all almost-sure winning 
strategies: $\optCost = \inf_{\straa \in \almost_{\game}(\target)} \val(\straa)$.
We consider the computational problems of approximating $\optCost$ and 
compute strategies $\straa \in \almost_{\game}(\target)$ such that the 
value $\val(\straa)$ approximates the optimal cost $\optCost$.
Formally, given $\epsilon>0$, the \emph{additive} approximation problem asks 
to compute a strategy $\straa \in  \almost_{\game}(\target)$ such that 
$\val(\straa) \leq \optCost + \epsilon$; and the \emph{multiplicative} 
approximation asks 
to compute a strategy $\straa \in  \almost_{\game}(\target)$ such that 
$\val(\straa) \leq \optCost\cdot(1 + \epsilon)$.

\begin{remark}\label{rem:notation}
We remark about some of our notations.
\begin{enumerate}
\item \emph{Rational costs:} We consider integer costs as compared rational costs,
and given a POMDP with rational costs one can obtain a POMDP with integer costs
by multiplying the costs with the least common multiple of the denominators.
The transformation is polynomial given binary representation of numbers.

\item \emph{Probabilistic observations:} 
Given a POMDP $\game=(S,\act,\trans,\obs,\obsmap,\initd_0)$, the most general type
of the observation function $\obsmap$ considered in the literature is of type 
$S \times \act \rightarrow \distr(\obs)$, i.e., the state and the action 
gives a probability distribution over the set of observations $\obs$. 
We show how to transform the POMDP $\game$ into one where the observation function 
is deterministic and defined on states, i.e., of type $S \rightarrow \obs$ as in our definitions. 
We construct an equivalent POMDP $\game'=(S',\act,\trans',\obs,\obsmap',\initd_0')$
as follows: 
(i)~the new state space is $S' = S \times \obs$; 
(ii)~the transition function $\trans'$ given a state $(s,z) \in S'$ and an action $a$ 
is as follows $\trans'((s,z),a)(s',z') = \trans(s,a)(s') \cdot \obsmap(s',a)(z')$; and
(iii) the deterministic observation function for a state $(s,z) \in S'$ is defined as $\obsmap'((s,z)) 
= z$.
Informally, the probabilistic aspect of the observation function is captured in the 
transition function, and by enlarging the state space with the product with the observations,
we obtain an observation function only on states.
Thus we consider observation on states which greatly simplifies the notation.
 
\item \emph{Strategies:} 
Note that in our definition of strategies, the strategies operate on state action sequences,
rather than observation action sequences. 
However, since we restrict strategies to be observation based, in effect they operate on 
observation action sequences. 

\end{enumerate}
\end{remark}

\section{Approximating $\optCost$ for Integer Costs}
In this section we will show that the problem of approximating the optimal cost $\optCost$ 
is undecidable.
We will show that deciding whether the optimal cost $\optCost$ is $- \infty$ or not
is undecidable in POMDPs with integer costs.
We present a reduction from the standard undecidable problem for probabilistic 
finite automata (PFA). 
A PFA $\automaton = (\states, \act, \trans,F,s_0)$ is a special case of a POMDP 
$\game = (\states,\act,\trans,\obs,\obsmap,s_0)$ with a single observation 
$\obs=\set{z}$ such that for all states $s\in \states$ we have $\obsmap(s)=z$.
Moreover, the PFA proceeds for only finitely many steps, and has a set $F$ of 
desired final states.
The \emph{strict emptiness problem} asks for the existence of a strategy $w$ 
(a finite word over the alphabet $\act$) such that the measure of the runs 
ending in the desired final states $F$ is strictly greater than $\frac{1}{2}$;
and the strict emptiness problem for PFA is undecidable~\cite{PazBook}.


\smallskip\noindent{\bf Reduction.}
Given a PFA $\automaton = (\states, \act, \trans,F,s_0)$ 
we construct a POMDP $\game = (\states',\act',\trans', \obs, \obsmap,s'_0)$ 
with a cost function $\cost$ and a target set $\target$ such that there exists a word 
$w \in \act^*$  accepted with probability strictly greater than $\frac{1}{2}$ 
in PFA $\automaton$ iff the optimal cost
in the POMDP $\game$ is $-\infty$.
Intuitively, the construction of the POMDP $\game$ is as follows: for every 
state $s \in \states$ of $\automaton$ we construct a pair of states $(s,1)$ and 
$(s,-1)$ in $\states'$ with the property 
that $(s,-1)$ can only be reached with a new action~$\$$ (not in $\act$) 
played in state $(s,1)$. The transition function $\trans'$ from the state 
$(s,-1)$ mimics the transition function $\trans$, i.e., 
$\trans'((s,-1),a)((s',1)) = \trans(s,a)(s')$.
The cost $\cost$ of $(s,1)$ (resp. $(s,-1)$) is $1$ (resp. $-1$), ensuring 
the sum of the pair to be $0$. 
We add a new available action $\#$ that when played in a final state reaches a 
newly added state $\good \in \states'$, 
and when played in a non-final state reaches a newly added state $\bad \in \states'$.
For states $\good$ and $\bad$ given action $\#$ the next state is the 
initial state; with negative cost $-1$ for $\good$ and positive cost $1$ for $\bad$. 
We introduce a single absorbing target state $\target = \{\targetstate\}$ and 
give full power to the player to decide when to reach the target state from the 
initial state, i.e., we introduce a new action $\surd$ that when played in the initial state 
deterministically reaches the target state $\targetstate$.

An illustration of the construction on an example is depicted on Figure~\ref{fig:incompatible}.
Whenever an action is played in a state where it is not available, the POMDP reaches a losing absorbing state, 
i.e., an absorbing state with cost $1$ on all actions, and for brevity we omit transitions
to the losing absorbing state.
The formal construction of the POMDP $\game$ is as follows:
\begin{itemize}
\item $\states' = (\states \times \{-1,1\}) \cup \{\good,\bad, \targetstate\}$,

\item $s'_0 = (s_0,1)$,

\item $\act' = \act \cup \{\#,\$, \surd\}$,

\item The actions $a\in \act\cup \{\#\}$ in states $(s,1)$ (for $s \in S$) 
lead to the losing absorbing state; the action $\$$ in states $(s,-1)$ (for 
$s \in S$) leads to the losing absorbing state; and the actions 
$a\in \act\cup \{\$\}$ in states $\good$ and $\bad$ lead 
to the losing absorbing state. The action $\surd$ played in any state other than the
initial state $s'_0$ also leads to the losing absorbing state.
The other transitions are as follows:
For all $s \in \states$: 
(i)~$\trans'((s,1),\$)((s,-1)) = 1$, 
(ii)~for all $a \in \act$ we have $\trans'((s,-1),a)((s',1)) = \trans(s,a)(s')$,
and 
(iii)~for action $\#$ and $\surd$ we have 
	 $$
	\trans'((s,-1),\#)(\good) = 
	\begin{cases}
	1 & \text{ if $s \in F$;} \\
	0 & \text{ otherwise;}
	\end{cases} \hspace{4em} 
	\trans'((s,-1),\#)(\bad) = 
	\begin{cases}
	1 & \text{ if $s \not \in F$;} \\
	0 & \text{ otherwise;}
	\end{cases}$$
	$$\trans'(\good,\#)(s'_0) = 1; \hspace{4em} \trans'(\bad,\#)(s'_0) = 1; \hspace{4em} \trans'(s'_0,\surd)(\targetstate)=1;$$
	\item there is a single observation $\obs = \{o\}$, and all the states $s \in \states'$ have $\obsmap(s)=o$.
	\end{itemize}
We define the cost function $\cost$ assigning only two different costs and only as a function of the 
state, i.e., $\cost:\states' \rightarrow \{-1,1\}$ and show the 
undecidability even for this special case of cost functions. 
For all $s \in \states$ the cost is $ \cost((s,-1)) = -1$, and similarly 
$\cost((s,1))=1$, and the remaining states have costs 
$\cost(\good) =-1$ and $\cost(\bad) = 1$.
The absorbing target state has cost~0; i.e., $\cost(\targetstate)=0$. 
Note that though the costs are assigned as function of states,
the costs appear on the out-going transitions of the respective states.

\smallskip\noindent{\bf Intuitive proof idea.} 
The basic idea of the proof is as follows: 
Consider a word $w$ accepted by the PFA with probability at least $\nu > \frac{1}{2}$.
Let the length of the word be $|w| = n$, and $w[i]$ denote the $i^{th}$ letter in $w$. 
Consider a strategy in the POMDP $ u = ( \$ \: w[1] \: \$ \: w[2] \ldots \: \$ w[n]  \: \# \: \#)^{k}\surd$ 
for some constant $k \geq 0$; that plays alternately the letters in $w$ and $\$$, then two $\#$'s, 
repeat the above $k$ times, and finally plays $\surd$. 
For any $\tau<0$, for $k \geq \frac{\tau-1}{1-2\cdot \nu}$, the expected total cost is below $\tau$.
Hence if the answer to the strict emptiness problem is yes, then the optimal cost is $-\infty$.
Conversely, if there is no word accepted with probability strictly greater than $\frac{1}{2}$, 
then the expected total cost between consecutive visits to the starting state is positive, 
and hence the optimal cost is at least~1.
We now formalize the intuitive proof idea.

 \begin{figure}[ht]
\begin{center}
\resizebox{12cm}{!}{
\begin{tikzpicture}[>=latex]
\tikzstyle{every node}=[font=\small]

\node[Player1,initial,initial text=]  (s0) {$s_0$};
\node[Final, inner sep=0.25cm,right of=s0,xshift=40]  (a) {$s$};
\draw[->]{
(s0) edge[] node[auto] {a} (a)
(a) edge[loop] node[above] {b} (a)
};

\node[Player1,initial,initial text=,right of=a, xshift=120] (s01) {$s_0,1$};
\node[Player1,right of=s01,xshift=50] (s00) {$s_0,-1$};
\node[Player1,right of=s00,xshift=50] (s11) {$s,1$};
\node[Player1,above of=s11,yshift=25] (s10) {$s,-1$};
\node[Player1,inner sep=0.01cm,above of=s00,yshift=25] (good) {$\good$};
\node[Player1,inner sep=0.1cm,left of=s01,xshift=-50] (bad) {$\bad$};
\node[Final,inner sep=0.05cm,above of=s01,yshift=25] (target) {$\targetstate$};

\draw[->]{
(s01) edge[] node[auto] {$\$$} (s00)
(s00) edge[] node[auto] {$a$} (s11)
(s11) edge[bend left] node[auto] {$\$$} (s10)
(s10) edge[bend left] node[auto] {b} (s11)
(s10) edge[] node[auto] {$\#$} (good)
(good) edge[] node[left] {$\#$} (s01)
(s00) edge[bend left] node[auto] {$\#$} (bad)
(bad) edge[bend left] node[auto] {$\#$} (s01)
(s01) edge[] node[left] {$\surd$} (target)
};
\end{tikzpicture}
}
\caption{PFA $\automaton$ to a POMDP $\game$}
\label{fig:incompatible}
\end{center}
\end{figure}

\begin{lemma}\label{lem:finquant_1}
If there exists a word $w\in\act^*$ that is accepted with probability strictly greater 
than $\frac{1}{2}$ in $\automaton$, then the optimal cost $\optCost$
in the POMDP $\game$ is $-\infty$.
\end{lemma}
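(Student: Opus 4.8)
The plan is to exhibit, for every $k \in \nat$, an explicit almost-sure winning strategy $u_k$ whose value $\val(u_k)$ tends to $-\infty$ as $k \to \infty$; since $\optCost$ is the infimum of $\val(\cdot)$ over all almost-sure winning strategies, this immediately forces $\optCost = -\infty$. Fix a word $w = w[1]\cdots w[n]$ accepted by $\automaton$ with probability $\nu > \tfrac12$. Because $\game$ has a single observation, an observation-based pure strategy is simply a fixed finite word over $\act'$, so I take $u_k$ to be the word whose net effect is to simulate one run of $\automaton$ on $w$ (each letter $w[i]$ being played from a state $(s,-1)$, which is reached from $(s,1)$ by the reset action $\$$), and then to play $\#\#$ from the resulting $(s_n,-1)$ state, routing the play through $\good$ (if $s_n \in F$) or $\bad$ (otherwise) back to the initial state $(s_0,1)$; this block is repeated $k$ times, after which $\surd$ is played. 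Each $u_k$ is thus a pure finite-memory strategy.

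First I would verify that $u_k$ never enters the losing absorbing state along any realization, so that it is almost-sure winning. The key observation is that the word stays \emph{legal} no matter how the simulated PFA transitions resolve: from any $(s,1)$ the action $\$$ is enabled and leads to $(s,-1)$; from any $(s,-1)$ every letter $a \in \act$ is enabled and its random successor $(s',1)$ mirrors $\trans(s,a)(s')$; and $\#$ is enabled in every $(s,-1)$ as well as in $\good$ and $\bad$. Hence, whatever successors are sampled, the two $\#$'s deterministically carry the play back to $(s_0,1)$, and the terminal $\surd$ from $(s_0,1)$ reaches $\targetstate$ with probability $1$. Consequently $u_k \in \almost_{\game}(\target)$, and since the losing state (which would otherwise contribute cost $+\infty$) is avoided with probability $1$, the value $\val(u_k)$ is finite.

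Next I would evaluate $\val(u_k)$ by a cancellation argument. Within each block, every visit to a state $(s,1)$ contributes cost $+1$ and is immediately followed by the matching $(s,-1)$ contributing $-1$, so all of these pairs cancel to $0$; the only surviving contribution per block comes from the routing step, which visits $\good$ (cost $-1$) precisely when the sampled run ends in $F$, and $\bad$ (cost $+1$) otherwise. As the run accepts with probability $\nu$, the expected cost of a single block is $(-1)\cdot\nu + (+1)\cdot(1-\nu) = 1 - 2\nu$. Since the $k$ blocks are independent and each starts deterministically from $(s_0,1)$, linearity of expectation yields $\val(u_k) = k(1-2\nu) + 1$, where the extra $+1$ is the cost charged when $\surd$ is finally played in $(s_0,1)$.

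Finally, because $\nu > \tfrac12$ we have $1 - 2\nu < 0$, so $\val(u_k) = k(1-2\nu)+1 \to -\infty$ as $k \to \infty$; concretely $\val(u_k) \le \tau$ once $k \ge \frac{\tau-1}{1-2\nu}$, matching the stated bound. Therefore $\optCost = \inf_k \val(u_k) = -\infty$, as required. I expect the main obstacle to be the cost bookkeeping: making the cancellation of the $(s,1)/(s,-1)$ pairs precise across an entire block, and carefully confirming enabledness at every step so that no cost leaks in through an unintended transition to the losing state, which is what guarantees both that $u_k$ is almost-sure winning and that $\val(u_k)$ equals the clean expression above.
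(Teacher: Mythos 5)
Your proposal is correct and follows essentially the same route as the paper's proof: the same repeated-block strategy $(\$\,w[1]\,\$\,w[2]\cdots\$\,w[n]\,\#\#)^k\surd$, the same per-block expected cost $1-2\nu<0$ obtained from the cancelling $(s,1)/(s,-1)$ pairs plus the $\good/\bad$ routing step, linearity of expectation over the $k$ i.i.d.\ blocks, and the same threshold $k\geq\frac{\tau-1}{1-2\nu}$. Your explicit enabledness check and the observation that the play returns deterministically to $(s_0,1)$ after $\#\#$ match the paper's (briefer) justification that the strategy is almost-sure winning.
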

\begin{proof}
Let $w \in \act^*$ be a word that is accepted in $\automaton$ with probability  
$\nu > \frac{1}{2}$ and let $\tau \in \reals$ be any negative real-number threshold. 
We will construct a strategy in POMDP $\game$ ensuring that the target state $\targetstate$ 
is reached almost-surely and the value of the strategy is below $\tau$. 
As this is will be true for every $\tau < 0 $ it will follow that the optimal cost 
$\optCost$ is $-\infty$. 

Let the length of the word be $|w| = n$. 
We construct a pure finite-memory strategy in the POMDP $\game$ as follows: 
We denote by $w[i]$ the $i^{th}$ action in the word $w$. The finite-memory strategy we construct is specified as a 
word  $ u = ( \$ \: w[1] \: \$ \: w[2] \ldots \: \$ w[n]  \: \# \: \#)^{k}\surd$ for some constant $k \geq 0$,
i.e., the strategy plays alternately the letters in $w$ and $\$$, then two $\#$'s, 
repeat the above $k$ times, and finally plays $\surd$. 
Observe that by the construction of the POMDP $\game$, the sequence of costs (that appear on 
the transitions) is $(1,-1)^n$ followed by (i)~$-1$ with probability $\nu$ (when 
$F$ is reached), and (ii)~$+1$ otherwise; and the whole sequence is repeated 
$k$ times.

Let $r_1, r_2, r_3, \ldots r_m$ be the finite sequence of costs and 
$s_j  =  \sum_{i=1}^{j}r_i$. 
The sequence of costs can be partitioned into blocks of length 
$2\cdot n+1$, intuitively corresponding to the transitions of a single run on the word 
$(\$ \: w[1] \: \$ \: w[2] \ldots \: \$ w[n] \: \# \: \#)$. 
We define a random variable $X_i$ denoting the sum of costs of the $i^{th}$ 
block in the sequence, i.e., with probability $\nu$ for all $i$ the value of 
$X_i$ is $-1$ and with probability $1-\nu$ the value is 
$1$. 
The expected value of $X_i$ is therefore equal to $\Exp[X_i]=1-2\cdot \nu $, 
and as we have that $\nu > \frac{1}{2}$ it follows that $\Exp[X_i] < 0$.
The fact that 
after the $\#\#$ the initial state is reached implies that the random variable 
sequence $(X_i)_{0 \leq i\leq k}$ is a finite sequence of i.i.d's. 
By linearity of expectation we have that the expected total cost of the word $u$ is $k \cdot \Exp[X_i]$ 
plus an additional $1$ for the last $\surd$ action. 
Therefore, by choosing an appropriately large $k$ (in particular for $k \geq \frac{\tau-1}{1-2\cdot\nu}$) 
we have the expected total cost is below $\tau$.
As playing the $\surd$ action from the initial state reaches the target state $\targetstate$ almost-surely, 
and after $\#\#$ the initial state is reached almost-surely, we have that by playing according to the strategy 
$u$ the target state $\targetstate$ is reached almost-surely.
The desired result follows.
\hfill\qed
\end{proof}

We now show that pure finite-memory strategies are sufficient for the POMDP we 
constructed from the probabilistic automata, and then prove a lemma that 
proves the converse of Lemma~\ref{lem:finquant_1}.

\begin{lemma}\label{lemm:pure-fin-mem}
Given the POMDP $\game$ of our reduction from the PFA, if there is a randomized 
(possibly infinite-memory) strategy $\straa$ with $\val(\straa)<1$,
then there exists a pure finite-memory strategy $\straa'$ with $\val(\straa')<1$.
\end{lemma}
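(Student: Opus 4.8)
The plan is to exploit the fact that $\game$ has a single observation, so that every observation-based strategy is effectively \emph{blind}: since all states share one observation, the observation history carries no information beyond the sequence of actions the strategy itself has played. Consequently the belief-support update is deterministic, and a \emph{pure} observation-based strategy is nothing more than a fixed (finite or infinite) word $v \in (\act')^{*} \cup (\act')^{\omega}$. Moreover, the distribution of the action sequence produced by a randomized strategy $\straa$ is governed solely by $\straa$'s internal randomization and is independent of the hidden state transitions (the states merely ``ride along''). Thus $\straa$ induces a probability distribution $\mathcal{D}_{\straa}$ over words, and for every fixed word $v$ the conditional play distribution given action-sequence $v$ coincides with the play distribution of the pure strategy $v$.

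First I would record the decomposition $\val(\straa) = \expect_{v \sim \mathcal{D}_{\straa}}[\val(v)]$, obtained from the law of total expectation by conditioning on the realized action sequence. Since $\straa$ is almost-sure winning, $1 = \prb^{\straa}_{\initd_0}(\Reach(\target)) = \expect_{v}[\prb(\Reach(\target)\mid v)]$, so $\mathcal{D}_{\straa}$-almost-every word $v$ is itself almost-sure winning and therefore has finite value. Given $\val(\straa) < 1$, the set $\{v : \val(v) < 1\}$ must carry positive $\mathcal{D}_{\straa}$-mass (otherwise $\val(v) \geq 1$ a.s.\ forces $\val(\straa)\geq 1$); intersecting with the full-measure set of almost-sure-winning words yields a single pure word $v^{*}$ that is almost-sure winning and satisfies $\val(v^{*}) < 1$.

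It remains to make $v^{*}$ finite. Because $v^{*}$ is almost-sure winning, its play reaches the absorbing target $\targetstate$ almost surely, and the only transition into $\targetstate$ is via $\surd$ from the initial state; hence $v^{*}$ plays $\surd$ at some first position $L$ at which the current state is almost surely $(s_0,1)$. Truncating $v^{*}$ after this $\surd$ (and letting the strategy play arbitrarily afterwards, which is irrelevant since the play is a.s.\ already at $\targetstate$) gives a finite word, i.e.\ a \emph{pure finite-memory} strategy $\straa'$ implemented by a position counter, with $\val(\straa') = \val(v^{*}) < 1$, as required.

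The main obstacle is the decomposition step $\val(\straa)=\expect_{v}[\val(v)]$: one must argue rigorously that the action-sequence process is independent of the hidden transitions and that the total-cost expectation splits under this conditioning, taking care that expectations are well defined even though costs are signed and plays can a priori be infinite (avoiding an $\infty-\infty$ situation); almost-sure winning together with the absorbing, cost-$0$ target is exactly what guarantees the relevant integrability. I would handle this either by the general equivalence of behavioral and mixed strategies under perfect recall, or---more concretely and in a way that also avoids a subtle optional-stopping pitfall---by the round decomposition of the intuitive proof idea: after each $\#\#$ the state is \emph{deterministically} reset to $(s_0,1)$, so the play is a sequence of i.i.d.-structured rounds whose costs are $-1$ (if the simulated word reaches $F$) or $+1$ (otherwise); crucially, blindness prevents the strategy from observing the $\good/\bad$ outcome, so its choice of words and of when to play $\surd$ is independent of the round costs. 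This yields $\val(\straa) = 1 + \expect[\sum_{j=1}^{N}(1-2\beta(v_j))]$, where $\beta(v_j)$ is the acceptance probability of the word played in round $j$; a value below $1$ then forces some word $v^{*}$ with $\beta(v^{*})>\tfrac12$, and ``play the round for $v^{*}$ once, then $\surd$'' is the desired pure finite-memory strategy.
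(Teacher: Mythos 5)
Your proposal is correct and follows essentially the same route as the paper's proof: it uses the single observation to view the randomized strategy as a distribution over words of the form $w_1\#\# \cdots w_n\#\#\surd$, decomposes the cost into rounds separated by $\#\#$ (which return the play to the initial state), and argues that if every round word had nonnegative expected cost the total value would be at least $1$, so some word $v^{*}$ must have negative expected round cost and the pure finite word $v^{*}\#\#\surd$ is the desired finite-memory witness. Your treatment is in fact slightly more careful than the paper's, since you explicitly isolate the independence of the strategy's word choices from the unobserved $\good/\bad$ outcomes, a point the paper's step ``$\Exp[X_i]\ge 0$ for all $i$, hence the total is at least $1$'' leaves implicit.
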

\begin{proof}
Let $\straa$ be a randomized (possibly infinite-memory) strategy with the expected total cost strictly less than $1$.
As there is a single observation in the POMDP $\game$ constructed in our reduction, the strategy does not receive 
any useful feedback from the play, i.e., the memory update function $\straa_u$ always receives as one of the 
parameters the unique observation.
Note that with probability $1$ the resolving of the probabilities in the strategy $\straa$ leads to finite words 
of the form $\rho  = w_1 \#\# w_2 \#\# \ldots \#\# w_n \#\# \surd$,
as otherwise the target state $\targetstate$ is not reached with probability $1$.
From each such word $\rho$ we extract the finite words $w_1,w_2, \ldots, w_n$ that occur in $\rho$,
and consider the union of all such words as $W(\rho)$, and then consider the union $W$ over all such words
$\rho$.
We consider two cases:
\begin{enumerate}
\item If there exists a word $v$ in $W$ such that the expected total cost after playing the word $v\#\#$ is strictly less than $0$, 
then the pure finite-memory strategy $v \#\#\surd$ ensures that the expected total cost strictly less than $1$.
        
\item Assume towards contradiction that for all the words $v$ in $W$ the expected total cost of 
$v\#\#$ is at least $0$.
Then with probability $1$ resolving the probabilities in the strategy $\straa$ leads to finite words of the form 
$\wb{w} = w_1 \#\# w_2 \#\# \ldots w_n \#\# \surd$, 
where each word $w_i$ belongs to  $W$, that is played on the POMDP $\game$.
Let us define a random variable $X_i$ denoting the sum between $i$ and $(i+1)$-th occurrence of $\#\#$.
The expected total cost of $w_i\#\#$ is $\Exp[X_i]$ and is at least $0$ for all $i$.
Therefore the expected cost of the sequence $\wb{w}$ (which has $\surd$ in the end with cost~1) 
is at least $1$.
%
%
%
%
Thus we arrive at a contradiction.
Hence, there must exist a word $v$ in $W$ such that $v\#\#$ has an expected total cost
strictly less than $0$.
\end{enumerate}
This concludes the proof.
\hfill\qed
\end{proof}

\begin{lemma}
If there exists no word $w\in\act^*$ that is accepted with probability strictly greater 
than $\frac{1}{2}$ in $\automaton$, then the optimal cost $\optCost=1$.

\end{lemma}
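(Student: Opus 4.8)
The plan is to prove the two inequalities $\optCost \le 1$ and $\optCost \ge 1$ separately and combine them. For the upper bound I would exhibit a concrete almost-sure winning strategy of value exactly $1$: from the initial state $s_0' = (s_0,1)$ play $\surd$ immediately. This deterministically reaches the absorbing target $\targetstate$, so the strategy is almost-sure winning, and the only cost charged is $\cost((s_0,1)) = 1$ on the single transition taken (the target then contributes $0$). Hence this strategy has value $1$, giving $\optCost \le 1$.

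The heart of the argument is the lower bound $\optCost \ge 1$, which I would prove by contradiction routed through Lemma~\ref{lemm:pure-fin-mem}. Suppose some (possibly randomized, infinite-memory) almost-sure winning strategy $\straa$ had $\val(\straa) < 1$. By Lemma~\ref{lemm:pure-fin-mem} there is then a pure finite-memory strategy $\straa'$ with $\val(\straa') < 1$, and inspecting the case analysis in its proof (case~2 would force value $\ge 1$, contradicting $\val(\straa') < 1$) yields a block word $v$ in the associated set $W$ such that the expected total cost of $v\,\#\#$ is strictly negative. I would then show that the existence of such a $v$ contradicts the hypothesis that no word is accepted by $\automaton$ with probability strictly above $\tfrac12$.

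The main obstacle, and the only real content beyond invoking the earlier lemma, is the exact cost bookkeeping translating ``expected cost of $v\,\#\#$ is negative'' into ``$\automaton$ accepts a word with probability $> \tfrac12$''. Since $\straa'$ has finite value it never enters the losing absorbing state, so $v$ is a valid block, namely an alternation $v = \$\,a_1\,\$\,a_2\cdots\$\,a_q\,\$$ ending in $\$$ (to sit at a $(\cdot,-1)$-state from which $\#$ is legal), with $a_1,\dots,a_q \in \act$. Each $\$$ is played from a $(\cdot,1)$-state (cost $+1$) and each $a_i$ from a $(\cdot,-1)$-state (cost $-1$), so the cost of $v$ is deterministically $(q+1)-q = 1$, independent of the hidden run. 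The first $\#$ is played from a $(\cdot,-1)$-state (cost $-1$), and the second $\#$ is played from $\good$ with cost $-1$ when the underlying run of $\automaton$ on $\wh{v} = a_1\cdots a_q$ ends in $F$, and from $\bad$ with cost $+1$ otherwise. Writing $\nu$ for the acceptance probability of $\wh{v}$, the expected cost of $v\,\#\#$ equals $1 - 1 + (-\nu + (1-\nu)) = 1 - 2\nu$, which is strictly negative exactly when $\nu > \tfrac12$; this contradicts the hypothesis.

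I would conclude by assembling the pieces: the contradiction shows that no almost-sure winning strategy has value below $1$, so $\optCost \ge 1$, and together with the upper bound this gives $\optCost = 1$ (in fact attained by the immediate-$\surd$ strategy). The point to state carefully is that the finiteness of $\val(\straa')$ rules out visits to the losing state and thereby guarantees that every block $v \in W$ has the clean alternating form on which the cost computation relies.
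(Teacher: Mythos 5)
Your proposal is correct and follows essentially the same route as the paper: the upper bound comes from playing $\surd$ immediately, and the lower bound reduces to pure finite-memory strategies via Lemma~\ref{lemm:pure-fin-mem} and then rests on the computation that a block $v\,\#\#$ has expected cost $1-2\nu$ where $\nu$ is the acceptance probability of the underlying word, so the hypothesis $\nu\le\frac12$ forces every block to contribute nonnegatively and the final $\surd$ to push the total to at least~$1$. The only (immaterial) difference is that you extract a single negative-cost block from the case analysis inside Lemma~\ref{lemm:pure-fin-mem} and derive the contradiction from it, whereas the paper argues directly that all block expectations are nonnegative and sums them; your explicit bookkeeping of the trailing $\$$ in each block is in fact slightly more careful than the paper's.
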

\begin{proof}
We will show that playing $\surd$ is an optimal strategy. It reaches the target state $\targetstate$
almost-surely with an expected total cost of $1$. 
Assume towards contradiction that there exists a strategy (and by Lemma~\ref{lemm:pure-fin-mem}, 
a pure finite-memory strategy) $\straa$ with the expected total cost strictly less than $1$.
Observe that as there is only a single observation in the POMDP $\game$ the pure finite-memory 
strategy $\straa$ can be viewed as a finite word of the form $ w_1 \#\# w_2 \#\# \ldots w_n \#\# \surd$.
 We extract the set of words $W =  \{w_1,w_2, \ldots, w_n\}$ from the strategy $\straa$. By the condition of the lemma,
 there exists no word accepted in the PFA $\automaton$ with probability strictly greater that $\frac{1}{2}$.
As in Lemma~\ref{lem:finquant_1} we define a random variable $X_i$ denoting the sum of costs after reading $w_i\#\#$. 
It follows that the expected value of $\Exp[X_i] \geq 0$ for all $0 \leq i \leq n$. By using the linearity of expectation
we have the expected total cost of  $w_1 \#\# w_2 \#\# \ldots w_n \#\#$ is at least $0$, and 
hence the expected total cost of the strategy is at least~1 due to the cost of the last $\surd$ action.
Thus we have a contradiction to the assumption that the expected total cost of strategy $\straa$ is strictly 
less than $1$.
\hfill\qed
\end{proof}


The above lemmas establish that if the answer to the strict emptiness problem for
PFA is yes, then the optimal cost in the POMDP is $-\infty$; and otherwise the 
optimal cost is~1.
Hence in POMDPs with integer costs determining whether 
the optimal cost $\optCost$ is $-\infty$ or~1 is undecidable, and 
thus the problem of approximation is also undecidable.

\begin{theorem}\label{thm:undec}
The problem of approximating the optimal cost $\optCost$ in POMDPs with integer costs is 
undecidable for all $\epsilon>0$ both for additive and multiplicative approximation.
\end{theorem}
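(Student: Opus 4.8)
The plan is to read the theorem as the bottom line of a many-one reduction from the strict emptiness problem for PFA, which is undecidable~\cite{PazBook}. The three preceding lemmas have already done the substantive work: Lemma~\ref{lem:finquant_1}, Lemma~\ref{lemm:pure-fin-mem}, and its converse together establish the sharp dichotomy $\optCost \in \set{-\infty, 1}$ for the POMDP $\game$ produced by the reduction, with $\optCost = -\infty$ exactly when $\automaton$ admits a word accepted with probability strictly greater than $\frac{1}{2}$, and $\optCost = 1$ otherwise. Since the map from $\automaton$ to $(\game,\cost,\target)$ is effective, it suffices to argue that any algorithm solving the approximation problem for some fixed $\epsilon > 0$ would let us tell which of the two widely separated values $\optCost$ takes, thereby deciding strict emptiness and contradicting its undecidability.

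For the additive case, I would fix an arbitrary $\epsilon > 0$ and run the purported approximation algorithm on $\game$. By definition its output is a strategy $\straa$ with $\optCost \le \val(\straa) \le \optCost + \epsilon$, the lower bound holding because $\optCost$ is the infimum of strategy values. If $\optCost = 1$, then $\val(\straa) \in [1, 1+\epsilon]$ is finite, and indeed a valid output is guaranteed to exist (the single-action strategy $\surd$ reaches $\targetstate$ almost-surely with $\val(\surd) = 1 \le 1 + \epsilon$). If $\optCost = -\infty$, the requirement becomes $\val(\straa) \le -\infty$, which no strategy an algorithm can actually produce will meet: by Lemma~\ref{lemm:pure-fin-mem} it suffices to consider pure finite-memory strategies, and every finite-memory strategy induces a finite Markov chain in which, since $\targetstate$ is reached almost-surely and per-step costs are bounded, $\val(\straa)$ is finite. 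Hence in this case the algorithm cannot return a conforming strategy and must instead signal that the optimal cost is unbounded below. Observing which branch occurs decides strict emptiness.

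The multiplicative case is structurally identical. For fixed $\epsilon > 0$ the threshold $\optCost\cdot(1+\epsilon)$ equals $1+\epsilon$ when $\optCost = 1$, which $\surd$ again meets, and equals $-\infty$ when $\optCost = -\infty$, which once more no finitely-representable (hence finite-value) strategy can meet. So the same case analysis applies verbatim, and a multiplicative-approximation algorithm would likewise separate the two values and decide strict emptiness. Combining both cases yields undecidability for every $\epsilon > 0$ under both notions of approximation.

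The one genuinely delicate point, and the step I would write out most carefully, is the bookkeeping around the value $-\infty$: I must ensure that the approximation constraint is truly unsatisfiable by the strategies an algorithm can emit when $\optCost = -\infty$, so that this case is reliably distinguished from $\optCost = 1$. This is exactly where the restriction to finite-memory strategies (justified by Lemma~\ref{lemm:pure-fin-mem}) is essential, since the infimum $-\infty$ is only approached by a sequence of finite-memory strategies of unboundedly negative but individually finite value; an infinite-memory strategy could in principle attain value $-\infty$, but no algorithm can output or certify such an object, so framing the reduction around the computable (finite-memory) strategies — or equivalently around the finite value the algorithm would report — closes the gap. Everything else is a routine packaging of the three lemmas into the reduction.
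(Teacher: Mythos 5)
Your proposal is correct and follows essentially the same route as the paper: the paper's own proof of this theorem is just the observation that the preceding lemmas yield the dichotomy $\optCost \in \set{-\infty,1}$ tied to strict emptiness, so any additive or multiplicative approximation would separate the two cases. Your extra care about why the $-\infty$ case is detectable (no emittable strategy can satisfy the approximation requirement there) is a reasonable elaboration of what the paper leaves implicit, not a different argument.
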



\newcommand{\unif}{\mathsf{unif}}
\newcommand{\U}{\wh{U}}

\newcommand{\famPOMDP}{\mathcal{F}}
\newcommand{\subPOMDP}{\mathcal{L}}

\section{Approximating $\optCost$ for Positive Costs}
In this section we consider POMDPs with positive cost functions, 
i.e., $\cost: \states \times \act \rightarrow \nat$ instead of 
$\cost: \states \times \act \rightarrow \integer$.
Note that the transitions from the absorbing target states have cost~0
as the goal is to minimize the cost till the target set is reached, 
and also note that all other transitions have cost at least~1. 
We established (in Theorem~\ref{thm:undec}) that for integer costs the problem of 
approximating the optimal cost is undecidable, and 
in this section we show that for positive cost functions the approximation 
problem is decidable.
We first start with a lower bound on the optimal cost.

\subsection{Lower Bound on $\optCost$}
We present a double-exponential lower bound on $\optCost$ with respect to the number of states of the POMDP. 
We define a family of POMDPs $\famPOMDP(n)$, for every $n$, with a single target state, 
such that there exists an almost-sure winning strategy, 
and for every almost-sure winning strategy the expected number of steps to reach the target state is double-exponential 
in the number of states of the POMDP.
Thus assigning cost~1 to every transition we obtain the double-exponential lower bound.

\smallskip\noindent\textbf{Preliminary.}
The action set we consider consists of two symbols $\act = \{a,\#\}$. 
The state space consists of an initial state $s_0$, a target state $\targetstate$, 
a losing absorbing state $\bad$ and a set of $n$ sub-POMDPs $\subPOMDP_i$ for $1 \leq i \leq n$. 
Every sub-POMDP $\subPOMDP_i$ consists of states $Q_i$ that form a loop of $p(i)$ states $q^i_1, q^i_2, \ldots q^i_{p(i)}$, 
where $p(i)$ denotes the $i$-th prime number and $q^i_1$ is the initial state of the sub-POMDP. 
For every state $q^i_j$ (for $ 1 \leq j \leq p(i)$) the transition function under action $a$ 
moves the POMDP to the state $q^i_{(j \mod p(i)) + 1}$ with probability $\frac{1}{2}$ and 
to the initial state $s_0$ with the remaining probability $\frac{1}{2}$. 
The action $\#$ played in the state $q^i_{p(i)}$ moves the POMDP to the target state $\targetstate$ 
with probability $\frac{1}{2}$ and to the initial state $s_0$ with the remaining probability $\frac{1}{2}$. 
For every other state in the loop $q^i_j$ such that $1 \leq j < p(i)$ the POMDP moves under action $\#$ 
to the losing absorbing state $\bad$ with probability $1$. 
The losing state $\bad$ and the target state $\targetstate$ are absorbing and have a self-loop under 
both actions with probability $1$.

\smallskip\noindent\textbf{POMDP family $\famPOMDP(n)$.}
Given an $n \in \mathbb{N}$ we define the POMDP $\famPOMDP(n)$ as follows: 
\begin{itemize}
\item The state space $S = Q_1 \cup Q_2 \cup \ldots Q_n  \cup \{s_0,\bad,\targetstate\}$.
\item There are two available actions $\act = \{a,\#\}$.
\item The transition function is defined as follows: action $a$ in the initial state leads to $\bad$ with probability $1$ 
and action $\#$ in the initial state leads with probability $\frac{1}{n}$ to the initial state of the sub-POMDP $\subPOMDP_i$ 
for every $1 \leq i \leq n$. 
The transitions for the states in the sub-POMDPs are described in the previous paragraph.
\item All the states in the sub-POMDPs $\subPOMDP_i$ do have the same observation $z$. 
The remaining states $s_0$, $\bad$, and $\targetstate$ are visible, i.e., each of these three states 
has its own observation.
\item The initial state is $s_0$.
\end{itemize}
The cost function $\cost$ is defined as follows: 
the self-loop transitions at $\targetstate$ have cost~0 and all other transitions have cost~1.
An example of the construction for $n=2$ is depicted in Figure~\ref{fig:p2}, where we omit the 
losing absorbing state $\bad$ and the transitions leading to $\bad$ for simplicity.

\smallskip\noindent{\bf Intuitive proof idea.} 
For a given $n \in \mathbb{N}$ let $p^*_n = \prod_{i=1}^{n} p(i)$ and 
$p^+_n=\sum_{i=1}^n p(i)$ denote the product and the sum of the first $n$ prime numbers,
respectively. 
Note that $p^*_n$ is exponential is $p^+_n$.
An almost-sure winning strategy must play as follows: in the initial state $s_0$ it plays
$\#$, and then if it observes the observation $z$ for at least $p^*_n$ consecutive steps, 
then for each step it must play action $a$, and at the $p^*_n$ step it can play action $\#$.
Hence the probability to reach the target state in $p^*_n$ steps is at most 
$(\frac{1}{2} \cdot (\frac{1}{2})^{p^*_n})$; and hence the expected number of steps to reach the 
target state is at least $p^*_n \cdot 2 \cdot 2^{p^*_n}$.
The size of the POMDP is polynomial in $p^+_n$ and thus the expected total cost is
double exponential.

\begin{lemma}
\label{lem:lower_bound}
There exists a family $(\famPOMDP(n))_{n \in \mathbb{N}}$ of POMDPs of size $\mathcal{O}(p(n))$ for a polynomial $p$ with a reachability objective, 
such that the following assertion holds: There exists a polynomial $q$ such that for every almost-sure winning strategy 
the expected total cost to reach the target state is at least $2^{2^{q(n)}}$.
\end{lemma}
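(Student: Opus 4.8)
The plan is to show that every almost-sure winning strategy in $\famPOMDP(n)$ is forced to spend, in expectation, at least $2^{p^*_n}$ steps before reaching $\targetstate$, and then to combine the elementary inequality $p^*_n = \prod_{i=1}^{n} p(i) \geq 2^n$ with the polynomial size bound $|S| = p^+_n + 3 = \mathcal{O}(n^2\log n)$ to obtain the double-exponential lower bound with $q(n)=n$. First I would record the two structural facts that drive everything. (i) Since $\bad$ is absorbing and admits no path to $\targetstate$, any strategy reaching $\bad$ with positive probability is not almost-sure winning; hence an almost-sure winning strategy $\straa$ avoids $\bad$ on every play in the support of $\prb_{s_0}^{\straa}$. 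In particular it must play $\#$ with probability $1$ in $s_0$ (since $a$ leads to $\bad$), and in any state $q^i_j$ with $j<p(i)$ it must never play $\#$. (ii) All states of all sub-POMDPs carry the common observation $z$, while $s_0,\bad,\targetstate$ are each visible; thus a reset to $s_0$ is observable, and conditioned on still observing $z$ the only information available to $\straa$ is the number of consecutive $z$'s seen since the last visit to $s_0$.

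The heart of the argument is a belief-support computation combined with the Chinese Remainder Theorem. After the initial $\#$ from $s_0$ the belief-support is $\{q^1_1,\dots,q^n_1\}$, and after $t$ further consecutive $a$-moves (while still observing $z$) it is exactly $B_t=\{q^i_{(t\bmod p(i))+1}\mid 1\le i\le n\}$, one live state per sub-POMDP, because each loop advances deterministically in its index and the resets are filtered out by the observation. By fact (i), $\#$ may be played from $B_t$ only if every state of $B_t$ is a loop-end, i.e. $t\equiv -1\pmod{p(i)}$ for all $i$; as the $p(i)$ are distinct primes, CRT forces $t\equiv -1 \pmod{p^*_n}$, so the smallest admissible time is $t=p^*_n-1$. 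Therefore an almost-sure winning strategy can attempt $\targetstate$ (via $\#$) only after at least $p^*_n-1$ uninterrupted $a$-moves.

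It then remains to turn this waiting time into a cost bound. Since $\#$ is played only at times $t\equiv -1\pmod{p^*_n}$, the target can be reached during an attempt (a stretch between consecutive visits to $s_0$) only if the play first survives at least $p^*_n-1$ consecutive $a$-moves without resetting; each $a$-move resets independently with probability $\tfrac12$, so this event has probability at most $(\tfrac12)^{p^*_n-1}$, after which the final $\#$ reaches $\targetstate$ with probability $\tfrac12$. Hence, uniformly over all (possibly randomized, infinite-memory) strategies, each attempt succeeds with conditional probability at most $(\tfrac12)^{p^*_n}$. Writing $T$ for the number of visits to $s_0$ before $\targetstate$ is reached, this gives $\prb(T>k)\geq \bigl(1-(\tfrac12)^{p^*_n}\bigr)^{k}$, whence $\Exp[T]=\sum_{k\geq 0}\prb(T>k)\geq 2^{p^*_n}$. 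As every visit to $s_0$ is followed by a cost-$1$ transition, the expected total cost is at least $2^{p^*_n}\geq 2^{2^n}$, which is $2^{2^{q(n)}}$ for $q(n)=n$; together with $|S|=p^+_n+3$ this proves the lemma. For completeness I would also note that the strategy playing $\#\,a^{\,p^*_n-1}\,\#$ repeatedly is almost-sure winning, so the quantification over such strategies is non-vacuous.

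The step I expect to be the main obstacle is making the belief-support analysis of the previous paragraph fully rigorous for arbitrary randomized, infinite-memory strategies: one must argue that the observation history (the count of consecutive $z$'s) is genuinely the only discriminating information, that $B_t$ really contains a live state in every sub-POMDP simultaneously at each admissible time, and that the per-attempt success bound $(\tfrac12)^{p^*_n}$ holds conditionally on the entire past regardless of the strategy's memory. Everything else — the CRT computation, the geometric-tail estimate, and the $p^*_n\geq 2^n$ and size estimates — is routine.
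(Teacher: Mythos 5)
Your proposal is correct and takes essentially the same route as the paper's proof: the observation/belief-support argument (with the Chinese Remainder Theorem, which the paper leaves implicit) forces every almost-sure winning strategy to play $a$ for $p^*_n-1$ consecutive steps before risking $\#$, each attempt then succeeds with probability at most $2^{-p^*_n}$, and the resulting geometric number of attempts gives the double-exponential cost bound. Your accounting via the expected number of visits to $s_0$ versus the paper's ``steps per attempt times number of attempts'' is only a cosmetic difference.
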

\begin{proof}
For $n \in \nat$, consider the POMDP $\famPOMDP(n)$, and an almost-sure winning strategy in the POMDP.
In the first step the strategy needs to play the $\#$ action from $s_0$, as otherwise the losing absorbing state is reached. 
The POMDP reaches the initial state of the sub-POMDPs $\subPOMDP_i$, for all $i$, with positive probability. 
As all the states in the sub-POMDPs have the same observation $z$, the strategy cannot base its decision on the current sub-POMDP. 
The strategy has to play the action $a$ until the observation $z$ is observed for $p^*_n$ steps in a row before playing action $\#$. 
If the strategy plays the action $\#$ before observing the sequence of $z$ observations $p^*_n$ times, then it reaches the losing absorbing 
state with positive probability (and would not have been an almost-sure winning strategy). 
This follows from the fact that there is a positive probability of being in a sub-POMDP, where the current state is not the last one of the loop.  
Hence an almost-sure winning strategy must play $a$ as long as the length of the sequence of the observation $z$ 
is less than $p^*_n$ consecutive steps.
Note that in between the POMDP can move to the initial state $s_0$, and the strategy restarts.
In every step of the sub-POMDPs, with probability $\frac{1}{2}$ the initial state is reached,
and the next state is in the sub-POMDP with probability $\frac{1}{2}$.
After observing the $z$ observation for $p^*_n$ consecutive steps, the strategy can play the action $\#$ that moves the POMDP to the target state $\targetstate$ with probability $\frac{1}{2}$ 
and restarts the POMDP with the remaining probability $\frac{1}{2}$. 
Therefore the probability of reaching the target state in $p^*_n$ steps is at most $(\frac{1}{2} \cdot (\frac{1}{2})^{p^*_n})$;
and hence the expected number of steps to reach the target state is at least $p^*_n \cdot 2 \cdot 2^{p^*_n}$.
The size of the POMDP is polynomial in $p^+_n$ and hence it follows that the expected total cost to
reach the target state is at least double exponential in the size of the POMDP. 
\hfill\qed
\end{proof}

\begin{figure}[ht]
\begin{center}
\begin{tikzpicture}[>=latex,remember picture,
  inner/.style={circle,draw=blue!50,fill=blue!20,thick,inner sep=2pt},
  outer/.style={draw=black,dashed,fill=green!10,thick,inner sep=6pt}
  ]]

   \node[outer] (p1) {
    \begin{tikzpicture}
      \node[State,solid] (p11) {$q^1_1$};
      \node[State, solid,below of=p11,yshift=-35] (p12) {$q^1_2$};
    \end{tikzpicture}
  };
  \node[outer,right=of p1,xshift=60] (p2) {
    \begin{tikzpicture}
      \node[State,solid] (p21) {$q^2_1$};
      \node[State,solid,below of=p21,xshift=30,yshift=-35] (p22) {$q^2_2$};
      \node[State,solid,below of=p21,xshift=-30,yshift=-35] (p23) {$q^2_3$};
     \end{tikzpicture}
  };

  \node[State,solid,below of=p1, yshift=-3,xshift=55] (smil) {$T$};
  \node[State, solid, above of=p1,initial, initial text=, yshift=60,xshift=55] (s) {$s_0$};

\draw[->]{
(s) edge[bend left=20] node[left,yshift=5,xshift=3] {$\#,\frac{1}{2}$} (p11)
(s) edge[bend right=20] node[above,xshift=5] {$\#,\frac{1}{2}$} (p21)
(p11) edge[solid,bend left] node[right,xshift=-20,yshift=-10] {$a, \frac{1}{2}$} (p12)
(p11) edge[solid,bend left] node[left] {$a, \frac{1}{2}$} (s)
(p12) edge[solid,bend left] node[left,xshift=20,yshift=10] {$a, \frac{1}{2}$} (p11)
(p12) edge[solid,bend right=20] node[below,text width=1cm,xshift=8,yshift=-5] {$a, \frac{1}{2}$ $\#, \frac{1}{2}$} (s)
(p12) edge[solid] node[below] {$\#, \frac{1}{2}$} (smil)
(p23) edge[solid] node[below] {$\#, \frac{1}{2}$} (smil)
(p21) edge[solid,bend left] node[right,xshift=-20] {$a, \frac{1}{2}$} (p22)
(p22) edge[solid,bend left] node[above] {$a, \frac{1}{2}$} (p23)
(p23) edge[solid,bend left] node[left,xshift=20] {$a, \frac{1}{2}$} (p21)
(p21) edge[solid,bend right] node[right] {$a, \frac{1}{2}$} (s)
(p22) edge[solid,bend right=60] node[right] {$a, \frac{1}{2}$} (s)
(p23) edge[solid,bend left=15] node[below,text width=1cm,xshift=5,yshift=-5] {$a, \frac{1}{2}$  $\#, \frac{1}{2}$} (s)
(smil) edge[loop above] node[right,text width=1cm,xshift=5,yshift=-5] {$ a,\#$} (smil)
  };
  \end{tikzpicture}

\caption{POMDP $\famPOMDP(2)$}
\label{fig:p2}
\end{center}
\end{figure}

\subsection{Upper Bound on $\optCost$}
In this section we will present a double-exponential upper bound on 
$\optCost$.

\smallskip\noindent{\bf Almost-sure winning belief-supports.}
Let $\beliefset(\game)$ denote the set of all  belief-supports in a POMDP $\game$, i.e.,
$\beliefset(\game) = \{ U \subseteq S \mid \exists z \in \obs : U \subseteq \obsmap^{-1}(z) \}.$
Let $\beliefsetwin(\game, \target)$ denote the set of almost-sure winning 
belief-supports, 
i.e., 
$\beliefsetwin(\game, \target) = \set{ U \in \beliefset(\game) \mid \text{ there exists an almost-sure winning strategy from $U$ }}$,
i.e., there exists an almost-sure winning strategy with initial distribution $\initd_U$ that is the uniform distribution over~$U$.

\smallskip\noindent{\bf Restricting to $\beliefsetwin(\game, \target)$.}
In the sequel without loss of generality we will restrict ourselves to 
belief-supports in $\beliefsetwin(\game, \target)$:
since from belief-supports outside $\beliefsetwin(\game, \target)$ there exists no almost-sure winning strategy,
all almost-sure winning strategies with starting belief-support in 
$\beliefsetwin(\game, \target)$ will ensure
that belief-supports not in $\beliefsetwin(\game, \target)$ are never reached.

\smallskip\noindent{\bf Belief updates.}
Given a belief-support $U \in \beliefset(\game)$, an action $a \in \act$, and an observation $z \in \obs$ 
we denote by $\update(U,z,a)$ the updated belief-support. Formally, the set $\update(U,z,a)$ is defined as follows:
$\update(U,z,a) = \bigcup_{s' \in U} \supp(\trans(s',a)) \cap \obsmap^{-1}(z)$.
The set of belief-supports reachable from $U$ by playing an action $a \in \act$ is denoted by $\update(U,a)$.
Formally, $\update(U,a) = \{U' \subseteq \states \mid  \exists z \in \obs: U' =  \update(U,z,a)) \wedge U' \neq \emptyset\}$.

\smallskip\noindent{\bf Allowed actions.}
Given a POMDP $\game$ and a belief-support $U \in \beliefsetwin(\game,\target)$, 
we consider the set of actions that are guaranteed to keep the next 
belief-support $U'$ in 
$\beliefsetwin(\game,\target)$ and refer these actions as \emph{allowed or safe}. 
The framework that restricts playable actions was also considered in~\cite{CK13}.
Formally we consider the set of allowed actions as follows:
Given a belief-support $U \in \beliefsetwin(\game,\target)$ we define 
$\allow(U) = \{ a \in \act \mid \forall U' \in \update(U,a): U' \in \beliefsetwin(\game,\target) \}$.

We now show that almost-sure winning strategies must only play allowed actions.
An easy consequence of the lemma is that for all belief-supports $U$ in 
$\beliefsetwin(\game,\target)$, there is always an allowed action.

\begin{lemma}
\label{lem:play_allow}
Given a POMDP with a reachability objective $\reach(\target)$, consider a strategy 
$\straa$ and a starting belief-support in $\beliefsetwin(\game, \target)$.
Given $\straa$, if for a reachable belief-support $U \in \beliefsetwin(\game, \target)$ the strategy 
$\straa$ plays an action $a$ not in $\allow(U)$ with positive probability,
then $\straa$ is not almost-sure winning for the reachability objective.
\end{lemma}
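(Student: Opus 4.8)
The plan is to prove the contrapositive in spirit: I will show that if $\straa$ ever plays a non-allowed action at a reachable belief-support $U \in \beliefsetwin(\game,\target)$, then with positive probability the play reaches a belief-support outside $\beliefsetwin(\game,\target)$, from which by definition the target cannot be reached with probability~1; this forces $\prb^{\straa}_{\initd_0}(\reach(\target)) < 1$. First I would fix a reachable belief-support $U \in \beliefsetwin(\game,\target)$ and an action $a \notin \allow(U)$ that $\straa$ plays with positive probability after some history $w$ with $\belief(\obsmap(w)) = U$. Since $a \notin \allow(U)$, by the definition of $\allow$ there exists a belief-support $U' \in \update(U,a)$ with $U' \notin \beliefsetwin(\game,\target)$, and since $U' \in \update(U,a)$ there is an observation $z \in \obs$ with $U' = \update(U,z,a)$ and $U' \neq \emptyset$.

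Next I would argue that this bad belief-support $U'$ is actually reached with positive probability along $\straa$. Because $U' = \update(U,z,a)$ is nonempty, there exist a state $s_1 \in U$ and a successor $s' \in \supp(\trans(s_1,a))$ with $\obsmap(s') = z$, so the one-step transition probability to $s'$ is strictly positive. Combining this with the positive probability of reaching the history $w$ (its cone has positive $\mu^\straa_{\initd_0}$-measure, since $U$ is reachable) and the positive probability $\straa(w)(a) > 0$, the extended prefix $w \cdot a \cdot s'$ has positive measure; by the belief-support update rule its belief-support is exactly $U'$. Hence with positive probability the play passes through a prefix whose belief-support lies outside $\beliefsetwin(\game,\target)$.

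Finally I would invoke the definition of $\beliefsetwin(\game,\target)$: from any belief-support not in $\beliefsetwin(\game,\target)$, no strategy (in particular no continuation of $\straa$) can reach $\target$ almost surely, so the conditional probability of reaching $\target$ from that reached prefix is strictly less than~1. Since this deficiency is weighted by a strictly positive probability of reaching the prefix, the overall probability $\prb^{\straa}_{\initd_0}(\reach(\target))$ is strictly below~1, so $\straa$ is not almost-sure winning. The main obstacle I anticipate is making the ``reached with positive probability from a bad belief-support implies loss'' step fully rigorous in the partial-observation setting: the continuation of $\straa$ after the history $w\cdot a\cdot s'$ is observation-based, so I must be careful that the failure guaranteed by $U' \notin \beliefsetwin(\game,\target)$ (which quantifies over \emph{all} observation-based strategies, including continuations of $\straa$) genuinely applies, rather than only to strategies that restart with the uniform distribution on $U'$; this requires noting that almost-sure reachability from a belief-support depends only on the support and not on the precise distribution, so the uniform-distribution formulation in the definition is without loss of generality.
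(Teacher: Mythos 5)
Your proposal follows essentially the same route as the paper's proof: a non-allowed action at a reachable belief-support $U$ leads with positive probability (via some observation $z$) to a nonempty belief-support $U'=\update(U,z,a)\notin\beliefsetwin(\game,\target)$, from which no strategy reaches $\target$ almost surely, so the overall reachability probability drops below~$1$. Your closing observation---that almost-sure winning from a belief-support depends only on the support and not on the particular distribution over it, so the uniform-distribution formulation of $\beliefsetwin$ applies to the actual conditional distribution induced by $\straa$---is a point the paper glosses over and is worth making explicit.

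One small step needs repair: you take $s_1\in U$ with $s'\in\supp(\trans(s_1,a))$ and $\obsmap(s')=z$, and then conclude that $w\cdot a\cdot s'$ has positive measure. But $s_1$ need not equal $\last(w)$, and $\mu(\Cone(w\cdot a\cdot s'))$ involves $\trans(\last(w),a)(s')$, which may be zero. The fix is to replace $w$ by a history $w_1$ with $\obsmap(w_1)=\obsmap(w)$ and $\last(w_1)=s_1$: such a $w_1$ exists by the definition of $\belief(\obsmap(w))=U$, every transition along it has positive probability, and since $\straa$ is observation-based it assigns the same (positive) probabilities to the actions along $w_1$ as along $w$; hence $\mu(\Cone(w_1\cdot a\cdot s'))>0$ and its belief-support is $U'$. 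This is exactly the fact the paper invokes when it asserts that ``all states in $U$ are reached with positive probability.'' With that adjustment your argument is complete.
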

\begin{proof}
Assume the strategy $\straa$ reaches the belief-support $U$ and plays an action $a \not \in  \allow(U)$.
Since the belief-support $U$ is reachable, it follows that given the strategy $\straa$ when the belief-support is 
$U$, all states in $U$ are reached with positive probability, i.e., given the strategy the belief-support $U$ is reached 
with positive probability. 
It follows from the definition of $\allow$ that there exists a belief-support $U' \in \update(U,a)$ that is not in $\beliefsetwin(\game,\target)$. 
By the definition of $\update$ there exists an observation $z \in \obs$ such that $U' = \update(U,z,a)$ and $U' \not = \emptyset$. 
It follows that by playing $a$ in belief-support $U$, there is a positive probability of observing observation $z$ and reaching 
belief-support $U'$ 
that does not belong to $\beliefsetwin(\game,\target)$. 
It follows that under action $a$, given the current belief-support is $U$, the next belief-support is $U'$ with 
positive probability.
By definition, for all belief-supports $U'$ that does not belong to $\beliefsetwin(\game,\target)$, 
if the starting belief-support is $U'$, then for all strategies the probability to reach $\target$
is strictly less than~1.
Hence if $U'$ is reached with positive probability from $U$ under action $a$, 
then $\straa$ is not almost-sure winning. 
The desired result follows.
\hfill\qed
\end{proof}

\begin{corollary}
\label{lem:allownonempty}
For all $U \in \beliefsetwin(\game,\target)$ we have $\allow(U) \neq \emptyset$.
\end{corollary}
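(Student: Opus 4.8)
The plan is to derive Corollary~\ref{lem:allownonempty} as an immediate consequence of Lemma~\ref{lem:play_allow}. The key observation is that membership in $\beliefsetwin(\game,\target)$ guarantees the existence of \emph{some} almost-sure winning strategy, and Lemma~\ref{lem:play_allow} constrains what such a strategy is permitted to do.

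First I would fix an arbitrary $U \in \beliefsetwin(\game,\target)$. By the very definition of $\beliefsetwin(\game,\target)$, there exists an almost-sure winning strategy $\straa$ with initial distribution $\initd_U$, the uniform distribution over $U$. With this starting belief-support, $U$ itself is trivially a reachable belief-support (it is the starting one), and all states in $U$ are visited with positive probability at the first step.

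Next I would apply the contrapositive of Lemma~\ref{lem:play_allow}. Since $\straa$ \emph{is} almost-sure winning for $\reach(\target)$ from the reachable belief-support $U$, the lemma forbids $\straa$ from playing any action outside $\allow(U)$ with positive probability at $U$. But $\straa$ is a strategy, so it must play \emph{some} action at $U$ with positive probability (its action-selection distribution $\straa(w)$ for a history $w$ with $\belief(\obsmap(w)) = U$ is a genuine probability distribution over $\act$, and $\act$ is nonempty). Let $a$ be any action in the support of $\straa$ at $U$. By the lemma, $a \in \allow(U)$, so $\allow(U) \neq \emptyset$, as desired.

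I do not anticipate any real obstacle here: the corollary is a one-line logical consequence of the lemma, and the only thing to be careful about is ensuring the starting belief-support itself counts as ``reachable'' in the sense required by Lemma~\ref{lem:play_allow} (it does, being the initial belief-support, so every state in it is reached with positive probability). The commented-out direct proof in the excerpt establishes the same fact from scratch via the belief-based almost-sure winning strategies of~\cite{CDHR06}; deriving it as a corollary of the already-proven Lemma~\ref{lem:play_allow} is strictly shorter and avoids re-invoking that machinery.
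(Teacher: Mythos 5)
Your proposal is correct and follows exactly the route the paper intends: the paper states this as a corollary of Lemma~\ref{lem:play_allow} (``an easy consequence of the lemma''), and your argument---take the almost-sure winning strategy guaranteed by $U \in \beliefsetwin(\game,\target)$, note that it must play some action at the initial belief-support $U$, and invoke the contrapositive of the lemma to conclude that action lies in $\allow(U)$---is precisely that one-line derivation, spelled out with the right care about $U$ being reachable as the starting belief-support.
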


\begin{figure}[t]
\begin{center}
\begin{tikzpicture}[>=latex,remember picture,
  inner/.style={circle,draw=blue!50,fill=blue!20,thick,inner sep=2pt},
  outer/.style={draw=black,dashed,fill=green!10,thick,inner sep=2pt}
  ]]
\node[State, solid,initial above, initial text=,xshift=30] (initial) {$s_0$};
   \node[outer] (p1) {
    \begin{tikzpicture}
    \node[State,solid, left of=initial,xshift=-30] (goal) {$T$};
    
    \node[State,solid, right of= initial,xshift=30] (trap) {$B$};
    \draw[->]{
  (initial) edge[solid,bend right] node[above] {$a, \frac{1}{2}$} (goal)
  (initial) edge[solid,bend left] node[above] {$a, \frac{1}{2}$} (trap)
  (initial) edge[solid,bend left] node[below] {$b, \frac{1}{2}$} (goal)
  (initial) edge[solid,loop below] node[right,xshift=1] {$b, \frac{1}{2}$} (goal)
  (goal) edge[solid,loop below] node[left,xshift=-2] {$b, 1$} (goal)
  (goal) edge[solid,loop above] node[left,xshift=-2] {$a, 1$} (goal)
  (trap) edge[solid,loop below] node[right,xshift=2] {$b, 1$} (trape)
  (trap) edge[solid,loop above] node[right,xshift=2] {$a, 1$} (trap)

  };
  \end{tikzpicture}

  };
\node[State, solid,initial above, initial text=,xshift=0] (initial2) {$s_0$};
\end{tikzpicture}

\caption{POMDP $\game$}
\label{fig:example}
\end{center}
\vspace{-1.5em}
\end{figure}

\begin{example}
Consider POMDP $\game$ depicted on Figure~\ref{fig:example} with three states: the initial state $s_0$, and two absorbing states (the target state $T$, and the loosing state $B$). There are two actions $a$ and $b$ available in the initial state, the first action $a$ leads to both the target state and the loosing state, each with probability $1/2$, while the second action~$b$ leads to the initial state and the target state with probability $1/2$ each. This POMDP is not a goal-POMDP, as the target state $T$ is not reachable from the loosing state $B$. Note that belief $\{s_0\}$ belongs to the set $\beliefsetwin(\game,T)$, as the strategy that plays only action $a$ reaches the target state $T$ almost-surely. The set of allowed actions $\allow(\{s_0\})$ does not contain action $b$, as any belief that contains the loosing state $B$ does not belong to the set $\beliefsetwin(\game,T)$.
\end{example}

\smallskip\noindent{\bf Markov chains and reachability.} 
A Markov chain $\ov{\game}=(\ov{S},\ov{\trans})$ consists of a finite set $\ov{S}$ of 
states and a probabilistic transition function $\ov{\trans}:\ov{S} \rightarrow \distr(\ov{S})$.
Given the Markov chain, we consider the directed graph $(\ov{S},\ov{E})$ where $\ov{E}=\set{(\ov{s},\ov{s}') 
\mid \trans(\ov{s}' \mid \ov{s}) >0}$.
The following standard properties of reachability in Markov chains 
will be used in our proofs~\cite{Kemeny}:
\begin{enumerate}
\item 
\emph{Property~1 of Markov chains.}
For a set $\ov{T} \subseteq \ov{S}$, if for all states $\ov{s} \in \ov{S}$ 
there is a path to $\ov{T}$ (i.e., for all states there is a positive
probability to reach $\ov{T}$), then from all states the set $\ov{T}$
is reached with probability~1.
\item \emph{Property~2 of Markov chains.} In a Markov chain if a set $\ov{T}$ is reached
almost-surely from $\ov{s}$, then the expected hitting time from $\ov{s}$ to
$\ov{T}$ is at most exponential in the number of the states of the Markov chain. 

\end{enumerate}

\smallskip\noindent{\bf The strategy $\straa_\allow$.}
We consider a \emph{belief-support-based stationary} (for brevity belief-based)\footnote{recall, for a belief-support-based stationary strategy, 
the probability distribution only depends on the current belief-support} strategy $\straa_\allow$ as follows:
for all belief-supports $U$ in $\beliefsetwin(\game,\target)$, the strategy 
plays uniformly at random all actions from $\allow(U)$. 
Note that as the strategy $\straa_\allow$ is belief-based, it
can be viewed as a finite-memory strategy 
$\straa_\allow = (\straa_u, \straa_n, \beliefsetwin(\game,\target), m_0)$, 
where the components are defined as follows:
(i)~The set of memory elements are the winning belief-supports $\beliefsetwin(\game,\target)$;
(ii)~the belief-support $m_0 \in \beliefsetwin(\game,\target)$ is the initial belief (i.e., $\supp(\initd_0)$);
(iii)~the action selection function given memory $U \in \beliefsetwin(\game,\target)$ is a uniform distribution 
over the set $\allow(U)$ of actions, i.e., 
$\straa_n(U) = \unif(\allow(U))$ where $\unif$ denotes the uniform distribution; and
(iv)~the memory update function given memory $U \in \beliefsetwin(\game,\target)$, observation $z \in \obs$, 
and action $a \in \allow(U)$ is defined as the belief-support update $U'$ from belief-support $U$ under action $a$ and observation $z$,
i.e., $\straa_u(U,z,a)= \update(U,z,a)$.

\smallskip\noindent{\bf The Markov chain $\game \restr \straa_\allow$.}
Given a POMDP $\game=(S,\act,\trans,\obs,\obsmap,s_0)$ and the strategy $\straa_\allow = (\straa_u, \straa_n, \beliefsetwin(\game,\target), m_0)$ the Markov chain $\game \restr \straa_\allow = (\ov{S}, \ov{\trans})$ obtained by
playing strategy $\straa_\allow$ in $\game$ is defined as follows:
\begin{itemize}
\item The set of states $\ov{S}$ is defined as follows: 
$\ov{S} =\set{(s,U) \mid U \in \beliefsetwin(\game,\target), s \in U}$, 
i.e., the second components are the almost-sure winning belief-supports and the first 
component is a state in the belief-support.
\item The probability that the next state is $(s',U')$ from a state $(s,U)$ 
is $\ov{\trans}((s,U))((s',U')) = 
\sum_{a\in\act}\straa_n(U)(a)\cdot\trans(s,a)(s')\cdot\straa_u(U,\obsmap(s'),a)(U')$.
\end{itemize}
The probability of transition can be decomposed as follows:
(i)~First an action $a\in\act$ is sampled according to the distribution $\straa_n(U)$; 
(ii)~then the next state $s'$ is sampled according to the distribution $\trans(s,a)$; and
(iii)~finally the new memory $U'$ is sampled according to the distribution $\straa_u(U,\obsmap(s'),a)$.

\begin{remark}
Note that due to the definition of the strategy $\straa_\allow$ (that only plays allowed actions) 
all states $(s',U')$ of the Markov chain $\game \restr \straa_\allow$ that are reachable from a 
state $(s,U)$ where $s \in U$ and $U \in \beliefsetwin(\game,\target)$ satisfy that $s' \in U'$ and 
$U' \in \beliefsetwin(\game,\target)$. 
\end{remark}

\begin{lemma}
\label{lem:allallowwins}
The belief-based strategy $\straa_\allow$ is an almost-sure winning strategy for all belief-supports  
$U \in \beliefsetwin(\game, \target)$ for the objective $\reach(\target)$.
\end{lemma}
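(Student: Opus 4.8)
The plan is to analyze the Markov chain $\game \restr \straa_\allow$ and show that its target set $\ov{T} = \set{(s,U) \in \ov{S} \mid s \in \target}$ is reached almost-surely from every state, invoking Property~1 of Markov chains. By the Remark preceding the lemma, every state reachable under $\straa_\allow$ has the form $(s,U)$ with $s \in U$ and $U \in \beliefsetwin(\game,\target)$; in fact every state of $\ov{S}$ has this form by construction. Since $\straa_\allow$ plays each action of $\allow(U)$ with positive probability and $\allow(U) \neq \emptyset$ by Corollary~\ref{lem:allownonempty}, a transition $(s,U) \to (s',U')$ exists in $\game \restr \straa_\allow$ whenever there is an allowed action $a \in \allow(U)$ with $\trans(s,a)(s') > 0$ and $U' = \update(U,\obsmap(s'),a)$.

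First I would reduce the statement to a purely graph-theoretic reachability condition: by Property~1 of Markov chains it suffices to prove that from every state $(s,U) \in \ov{S}$ there is a finite path in $\game \restr \straa_\allow$ to $\ov{T}$. Almost-sure reachability of $\ov{T}$ in the chain is then exactly almost-sure winning of $\reach(\target)$ for $\straa_\allow$ from every $U \in \beliefsetwin(\game,\target)$, since the initial states of the chain under $\initd_U$ are precisely the states $(s,U)$ with $s \in U$.

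The core step is to exhibit such a path from an arbitrary $(s,U) \in \ov{S}$. Since $U \in \beliefsetwin(\game,\target)$, there exists an almost-sure winning strategy $\straa_U$ from $U$; unfolding the definition of winning from $U$ (uniform initial distribution $\initd_U$) yields $\prb_s^{\straa_U}(\reach(\target)) = 1$ for the particular state $s \in U$, so the set of plays from $s$ reaching $\target$ is nonempty. Fix one concrete play $s = s_0, a_0, s_1, a_1, \ldots, s_k$ with $s_k \in \target$. By the contrapositive of Lemma~\ref{lem:play_allow}, every action $\straa_U$ plays with positive probability on a reachable winning belief-support is allowed; hence each $a_i \in \allow(U_i)$, where $U_0 = U$ and $U_{i+1} = \update(U_i, \obsmap(s_{i+1}), a_i)$. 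The key observation is that belief-supports depend only on the initial belief and the observation-action sequence, so the sequence $U_0, U_1, \ldots, U_k$ induced along this play coincides with the memory updates of $\straa_\allow$, and each $a_i$ is played by $\straa_\allow$ with positive probability. Therefore $(s_0,U_0), (s_1,U_1), \ldots, (s_k,U_k)$ is a genuine path in $\game \restr \straa_\allow$ ending in $(s_k,U_k) \in \ov{T}$, with each $(s_i,U_i) \in \ov{S}$ because allowed actions keep belief-supports within $\beliefsetwin(\game,\target)$.

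I expect the main obstacle to be the bookkeeping in this last step: verifying that the belief-supports tracked by the possibly randomized, infinite-memory strategy $\straa_U$ along the extracted play agree with the memory states of $\straa_\allow$, and that a \emph{single} target-reaching play (rather than the full probability-one event) can be pulled back to a positive-probability path in the chain. Once path existence is established for every state of $\ov{S}$, Property~1 of Markov chains immediately yields that $\ov{T}$, and hence $\target$, is reached with probability~1 under $\straa_\allow$ from every $U \in \beliefsetwin(\game,\target)$, completing the proof.
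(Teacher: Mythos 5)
Your proposal is correct and follows essentially the same route as the paper's proof: reduce to graph reachability in the Markov chain $\game \restr \straa_\allow$ via Property~1, extract a single target-reaching play from an almost-sure winning strategy for $U$, use Lemma~\ref{lem:play_allow} to conclude each action along it is allowed, and lift the play to a positive-probability path in the chain. The bookkeeping step you flag (matching the belief-support sequence of the extracted play to the memory updates of $\straa_\allow$) is handled in the paper exactly as you describe.
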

\begin{proof}
Consider the Markov chain $\game \restr \straa_\allow$ and a state $(s,U)$ of the Markov chain. 
As $U \in \beliefsetwin(\game,\target)$ by the definition of almost-sure winning belief-supports,
there exists a strategy $\straa$ that is almost-sure winning for the reachability objective
$\reach(\target)$ starting with belief-support $U$.

\smallskip\noindent{\em Reachability under $\straa$.}
Note that by Lemma~\ref{lem:play_allow} the strategy must only play allowed actions.
The strategy must ensure that from $s$ a target state $t$ is reached with positive 
probability by playing according to $\straa$ (given the initial belief-support is $U$).
It follows that there exists a finite prefix of a play $(s_1,a_1,s_2,a_2, \ldots a_{n-1},s_n)$ 
induced by $\straa$ where $s_1 = s$ and $s_n=t$ and for all $1 \leq i <n$ we have that 
$a_i \in \supp(\straa(\obsmap((s_1,a_1,s_2,a_2, \ldots, s_{i}))))$.
We define a sequence of belief-supports $U_1, U_2, \ldots U_n$, where $U_1 = U$ and 
$U_{i+1} = \update(U_i,\obsmap(s_{i+1}),a_i)$.
As $\straa$ is an almost-sure winning strategy, it follows from Lemma~\ref{lem:play_allow} that $a_{i} \in  \allow(U_i)$ for all $1 \leq i < n$.

\smallskip\noindent{\em Reachability in the Markov chain.}
Recall that the strategy $\straa_\allow$ plays all the allowed actions uniformly at random.
Hence it follows from the definition of the Markov chain $\game\restr \straa_\allow$ that for all $0 \leq i < n$ we 
have $\ov{\trans}((s_i,U_i))((s_{i+1},U_{i+1})) > 0$, i.e,
there is a positive probability to reach $(t, U_n)$ from $(s,U)$ in the Markov chain $\game \restr \straa_\allow$.
It follows that for an arbitrary state $(s,U)$ of the Markov chain $\game \restr \straa_\allow$ there exists a state $(t',U')$ with 
$t' \in \target$ that is reached with positive probability.
In other words, in the graph of the Markov chain, there is a path from all states $(s,U)$ to a state 
$(t',U')$ where $t' \in \target$.
Thus by Property~1 of Markov chains it follows that the target set $\target$ is reached with
probability 1. It follows that $\straa_\allow$ is an almost-sure winning strategy for all
belief-supports in $\beliefsetwin(\game,\target)$.
The desired result follows.
\hfill\qed
\end{proof}

\begin{remark}[Computation of $\straa_\allow$]\label{rem:straa_comp}
It follows from Lemma~\ref{lem:allallowwins} that the strategy $\straa_\allow$ 
can be computed by computing the set of almost-sure winning states in the 
\emph{belief-support MDP}.
The belief-support MDP is a perfect-observation MDP where each state is a belief-support 
of the original POMDP, and given an action, the next state is obtained according 
to the belief-support updates.
The strategy $\straa_\allow$ can be obtained by computing the set of 
almost-sure winning states in the belief-support MDP, and for discrete graph-based 
algorithms to compute almost-sure winning states in perfect-observation MDPs
see~\cite{CY95,CH11}. 
\end{remark}

\noindent{\bf Upper bound.}
We now establish a double-exponential upper bound on $\optCost$, matching our lower bound
from Lemma~\ref{lem:lower_bound}.
We have that $\straa_{\allow} \in \almost_{\game}(\target)$. 
Hence we have $\val(\straa_{\allow}) \geq  \inf_{\straa \in \almost_{\game}(\target)} \val(\straa)= \optCost$.
Once $\straa_{\allow}$ is fixed, since the strategy is belief-based (i.e., depends on the subset of states)
we obtain an exponential size Markov chain. 
It follows from Property~2 of Markov chains that given $\straa_\allow$ the expected hitting time 
to the target set is at most double exponential.
If $\cost_{\max}$ denotes the maximal cost of transitions, then $\optCost$ is 
bounded by $\cost_{\max}$ times the expected hitting time.
Thus we obtain the following lemma.

\begin{lemma}\label{lemm:doubexp}
Given a POMDP $\game$ with $n$ states, let $\cost_{\max}$ denote the maximal value of the cost of
all transitions.
There is a polynomial function $q$ such that $\optCost \leq 2^{2^{q(n)}} \cdot \cost_{\max}$.
\end{lemma}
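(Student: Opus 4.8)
The plan is to chain together the facts already established in the excerpt, so that the proof of Lemma~\ref{lemm:doubexp} becomes a short assembly of prior results rather than a fresh argument. First I would invoke Lemma~\ref{lem:allallowwins}, which guarantees that the belief-based strategy $\straa_\allow$ is almost-sure winning from every belief-support in $\beliefsetwin(\game,\target)$, and in particular from the initial belief-support $\supp(\initd_0)$. Hence $\straa_\allow \in \almost_{\game}(\target)$, which immediately gives $\optCost = \inf_{\straa \in \almost_{\game}(\target)} \val(\straa) \leq \val(\straa_\allow)$. So it suffices to bound $\val(\straa_\allow)$ from above by $2^{2^{q(n)}} \cdot \cost_{\max}$ for some polynomial $q$.

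Next I would turn to the Markov chain $\game \restr \straa_\allow$ defined earlier, whose states are pairs $(s,U)$ with $U \in \beliefsetwin(\game,\target)$ and $s \in U$. Since there are at most $2^n$ belief-supports and $n$ states, the number of states of this Markov chain is at most $n \cdot 2^n$, i.e. exponential in $n$. The proof of Lemma~\ref{lem:allallowwins} shows that from every state the target set is reached with probability~$1$, so I can apply Property~2 of Markov chains: the expected hitting time to $\target$ is at most exponential in the number of states of the chain. Composing the two exponentials, the expected hitting time is at most $2^{2^{q_0(n)}}$ for a suitable polynomial $q_0$, since exponential-in-(exponential-in-$n$) is double exponential in $n$.

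Finally I would convert expected hitting time into expected total cost. Because every transition has cost at most $\cost_{\max}$, the total cost accumulated along any play until $\target$ is reached is bounded by $\cost_{\max}$ times the number of steps taken; taking expectations and using linearity gives $\val(\straa_\allow) = \expect_{\initd_0}^{\straa_\allow}[\sumcost] \leq \cost_{\max} \cdot (\text{expected hitting time}) \leq 2^{2^{q_0(n)}} \cdot \cost_{\max}$. Combining this with $\optCost \leq \val(\straa_\allow)$ yields the claimed bound with $q = q_0$.

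The only genuine subtlety — and the step I would treat most carefully — is the bookkeeping that turns ``exponential size chain'' plus ``exponential hitting time in the chain'' into a clean \emph{double}-exponential bound in $n$: one must verify that an exponential function of an exponential function of $n$ is indeed dominated by $2^{2^{q(n)}}$ for a polynomial $q$, and that passing from expected hitting time to expected cost only multiplies by $\cost_{\max}$ rather than introducing further dependence on the state count. Everything else is a direct citation of Lemma~\ref{lem:allallowwins} and Property~2 of Markov chains, so the proof is essentially a routine composition and requires no new combinatorial or probabilistic input beyond what the excerpt already supplies.
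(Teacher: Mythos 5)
Your proposal is correct and follows essentially the same route as the paper: the paper also observes that $\straa_\allow \in \almost_{\game}(\target)$ (so $\optCost \leq \val(\straa_\allow)$), notes that fixing the belief-based strategy yields an exponential-size Markov chain, applies Property~2 of Markov chains to get a double-exponential expected hitting time, and multiplies by $\cost_{\max}$. No discrepancies to report.
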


\subsection{Optimal finite-horizon strategies}
\label{subsec:optimal}
Our algorithm for approximation of $\optCost$ will use algorithms for optimizing the
finite-horizon costs. 
We first recall the well-known construction of the optimal finite-horizon 
strategies that minimizes the expected total cost in POMDPs for 
length $k$.

\smallskip\noindent\textbf{Information state.}
For minimizing the expected total cost, strategies based on information states are sufficient~\cite{sondik}. 
An \emph{information state} $b$ is defined as a probability distribution over the set of states, where for $s \in \states$ 
the value $b(s)$ denotes the probability of being in state $s$. 
We will denote by $\informationstates$ the set of all information states. 
Given an information state $b$, an action $a$, and an observation $z$, computing the resulting information state 
$b'$ can be done in a straight forward way, see~\cite{cassandra1998exact}.

\smallskip\noindent{\bf Value-iteration algorithm.}
The standard finite-horizon value-iteration algorithm for expected total cost in the setting of perfect-information MDPs 
can be formulated by the following equation:
\begin{eqnarray*}
V^*_0(s) &=& 0;\\
V^*_n(s) &=& \min_{a \in \act}\left[ \cost(s,a)+\sum_{s' \in \states}\trans(s,a)(s')V^*_{n-1}(s')\right];
\end{eqnarray*}
where $V^*_n(s)$ represents the value of an optimal policy, when the starting state is $s$ and there are $n$ decision steps remaining.
For a POMDP the finite-horizon value-iteration algorithm works on the information states.
Let $\psi(b,a)$ denote the probability distribution over the information states given that action $a$ was 
played in the information state $b$. 
The cost function $\cost':\informationstates\times \act \rightarrow \nat$ that maps every pair of an information state and an action 
to a positive real-valued cost is defined as follows: $\cost'(b,a) = \sum_{s \in \states} b(s) \cdot \cost(s,a)$.
The resulting equation for finite-horizon value-iteration algorithm for POMDPs is as follows:
\begin{eqnarray*}
V^*_0(b) &=& 0;\\
V^*_n(b) &=& \min_{a \in \act}\left[ \cost'(b,a)+\sum_{b' \in \informationstates}\psi(b,a)(b')V^*_{n-1}(b')\right].
\end{eqnarray*}

\smallskip\noindent{\bf The optimal strategy $\straa^\mathsf{FO}_k$ and $\straa^*_k$.}
In our setting we modify the standard finite-horizon value-iteration algorithm by restricting the optimal strategy to play 
only allowed actions and restrict it only to belief-supports in the set $\beliefsetwin(\game,\target)$. 
The equation for the value-iteration algorithm is defined as follows:
\begin{eqnarray*}
V^*_0(b) &=& 0;\\
V^*_n(b) &=& \min_{a \in \allow(\supp(b))}\left[ \cost'(b,a)+\sum_{b' \in \informationstates}\psi(b,a)(b')V^*_{n-1}(b')\right].
\end{eqnarray*}
We obtain a strategy $\straa^\mathsf{FO}_k = (\straa_u, \straa_n, M, m_0)$ that is finite-horizon optimal for length $k$ 
(here FO stands for finite-horizon optimal) from the above equation as follows:
(i)~the set of memory elements $M$ is defined as $\informationstates \times \nat$;
(ii)~the initial memory state is $(\initd_0,k)$;
(iii)~for all $1\leq n \leq k$, the action selection function $\straa_n((b,n))$ selects an arbitrary action $a$ such that $a = \arg\min_{a \in  \allow(\supp(b))}
 \left[ \cost'(b,a)+\sum\limits_{b' \in \informationstates}\psi(b,a)(b')V^*_{n-1}(b') \right]$; and (iv)~the memory update function given 
 a memory state $(b,n)$, action $a$, and an observation $o$ updates to a memory state $(b',n-1)$, where $b'$ is the unique information
 state update from information state $b$ under action $a$ and observation $z$.
As the target states $\target$ in the POMDP $\game$ are absorbing and 
the costs on all outgoing edges from the target states are the only edges with cost $0$, 
it follows that for sufficiently large $n$ the strategy $\straa_\mathsf{FO}^k$ minimizes the expected total cost 
to reach the target set $\target$.
Given $\straa^\mathsf{FO}_k$, we define a strategy $\straa^*_k$ as follows: 
for the first $k$ steps, the strategy $\straa^*_k$ plays as the strategy $\straa^\mathsf{FO}_k$, and 
after the first $k$ steps the strategy plays as the strategy $\straa_{\allow}$.

\begin{lemma}\label{lemm:fin1}
For all $k \in \nat$ the strategy $\straa^*_k$ is almost-sure winning for the reachability objective $\reach(\target)$.
\end{lemma}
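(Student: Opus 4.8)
The plan is to split any play produced by $\straa^*_k$ into its first $k$ steps, governed by $\straa^\mathsf{FO}_k$, and the subsequent suffix, governed by $\straa_\allow$, and to argue that the belief-support is kept inside $\beliefsetwin(\game,\target)$ throughout the first phase so that the almost-sure winning guarantee of $\straa_\allow$ (Lemma~\ref{lem:allallowwins}) can be applied at the hand-over. Recall that we restrict attention to starting belief-supports in $\beliefsetwin(\game,\target)$, so $\supp(\initd_0) \in \beliefsetwin(\game,\target)$.

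First I would establish, by induction on the step count $j$ with $0 \le j \le k$, that every belief-support $\supp(b)$ occurring in a memory state $(b,k-j)$ reachable under $\straa^\mathsf{FO}_k$ lies in $\beliefsetwin(\game,\target)$. The base case is the initial belief-support $\supp(\initd_0)$. For the inductive step, note that by construction the action-selection function of $\straa^\mathsf{FO}_k$ only ever picks actions in $\allow(\supp(b))$; hence every successor belief-support is of the form $\update(\supp(b),z,a)$ for some observation $z$ and some $a \in \allow(\supp(b))$, and by the very definition of $\allow$ every such belief-support is again in $\beliefsetwin(\game,\target)$. In particular, with probability $1$ the belief-support $U_k = \supp(b_k)$ after exactly $k$ steps is an element of $\beliefsetwin(\game,\target)$.

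Next I would analyze the two ways in which a play can satisfy $\reach(\target)$. If the target set is visited within the first $k$ steps, the reachability objective is already met on that play. Otherwise, after $k$ steps the play is in some concrete state $s_k$ whose belief-support $U_k$ belongs to $\beliefsetwin(\game,\target)$ by the previous paragraph, and from that point $\straa^*_k$ behaves exactly like $\straa_\allow$ started from $U_k$. Here I would invoke Lemma~\ref{lem:allallowwins}: its proof shows that in the Markov chain $\game\restr\straa_\allow$ a target state is reachable with positive probability from every state $(s,U)$ with $U \in \beliefsetwin(\game,\target)$, and hence, by Property~1 of Markov chains, the target is reached almost-surely not merely from the uniform distribution over $U_k$ but from the actual state $s_k$ inside $U_k$. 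Conditioning on the first-$k$-step prefix and summing, the conditional probability of reaching $\target$ is $1$ on both the target-already-reached event and its complement, so $\prb^{\straa^*_k}_{\initd_0}(\reach(\target)) = 1$.

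The only point requiring care is the hand-over between the two phases: $\straa^\mathsf{FO}_k$ carries information states (distributions) as memory while $\straa_\allow$ carries belief-supports, so I must check that the belief-support $\supp(b_k)$ read off the finite-horizon memory at step $k$ coincides with the memory element from which $\straa_\allow$ is resumed. This is immediate because the information-state update used by $\straa^\mathsf{FO}_k$ projects onto exactly the belief-support update $\update$ used by $\straa_\allow$, so the supports agree step by step; the rest of the argument is routine once this identification and the invariant ``the current belief-support stays in $\beliefsetwin(\game,\target)$'' are in place.
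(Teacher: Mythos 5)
Your proposal is correct and follows essentially the same route as the paper's proof: the first $k$ steps only use allowed actions, so every reachable belief-support stays in $\beliefsetwin(\game,\target)$, and then Lemma~\ref{lem:allallowwins} finishes the argument for the $\straa_\allow$ suffix. Your extra care about the hand-over (that the guarantee of Lemma~\ref{lem:allallowwins} holds from every state $(s,U)$ of the Markov chain, not only from the uniform distribution over $U_k$) is a welcome refinement of a point the paper leaves implicit, but it is not a different approach.
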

\begin{proof}
By definition the strategy  $\straa^\mathsf{FO}_k$ (and hence the strategy $\straa^*_k$) plays only allowed actions in the first $k$ steps.
Hence it follows that every reachable belief-support in the first $k$ steps belongs to $\beliefsetwin(\game,\target)$. 
After the first $k$ steps, the strategy plays as $\straa_\allow$, and by Lemma~\ref{lem:allallowwins}, 
the strategy $\straa_\allow$ is almost-sure winning for the reachability objective $\reach(\target)$ from every belief-support in 
$\beliefsetwin(\game,\target)$. The result follows.
\hfill\qed
\end{proof}

Note that the only restriction in the construction of the strategy  $\straa^\mathsf{FO}_k$ is that it
must play only allowed actions, and since almost-sure winning strategies only play allowed actions
(by  Lemma~\ref{lem:play_allow}) 
it follows that $\straa^\mathsf{FO}_k$ (and hence $\straa^*_k$) is optimal for the finite-horizon of 
length $k$ (i.e., for the objective $\sumcost_k$) 
among all almost-sure winning strategies.

\begin{lemma}\label{lemm:fin2}
For all $k \in \nat$ we have 
$\expect_{\initd_0}^{\straa^*_k}[\sumcost_k] = 
\inf_{\straa \in \almost_{\game}(\target)} \expect_{\initd_0}^{\straa}[\sumcost_k] $.
\end{lemma}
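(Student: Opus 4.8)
The plan is to establish the equality by a squeezing argument relating three quantities: the value $\expect_{\initd_0}^{\straa^*_k}[\sumcost_k]$, the infimum over almost-sure winning strategies, and the optimum of the restricted value-iteration taken over the larger class $\mathcal{S}_\allow$ of observation-based strategies that, at every reachable belief-support $U \in \beliefsetwin(\game,\target)$, play only actions in $\allow(U)$ with probability~$1$. The first preliminary observation is that $\sumcost_k$ depends only on the first $k$ steps of a play; since $\straa^*_k$ coincides with $\straa^\mathsf{FO}_k$ on the first $k$ steps by construction, and only switches to $\straa_\allow$ afterwards, the tail behaviour is irrelevant and $\expect_{\initd_0}^{\straa^*_k}[\sumcost_k] = \expect_{\initd_0}^{\straa^\mathsf{FO}_k}[\sumcost_k]$.

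The substantive step, which I expect to be the main obstacle, is to argue that the value-iteration defining $\straa^\mathsf{FO}_k$ computes the optimal value of $\sumcost_k$ over all of $\mathcal{S}_\allow$, i.e.\ that $\expect_{\initd_0}^{\straa^\mathsf{FO}_k}[\sumcost_k] = \inf_{\straa \in \mathcal{S}_\allow}\expect_{\initd_0}^{\straa}[\sumcost_k]$. This rests on the sufficiency of information states for minimizing finite-horizon total cost~\cite{sondik}, together with the standard dynamic-programming correctness of the finite-horizon value iteration. The delicate point is that the only constraint imposed on the recursion is the restriction $a \in \allow(\supp(b))$, which depends on the information state $b$ only through its support; I would check that this restriction is consistent with the information-state updates (the support of the updated information state is exactly the belief-support update), so that the restricted value iteration faithfully ranges over, and is optimal within, the class $\mathcal{S}_\allow$.

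With these two identities in hand, the remaining inequalities are bookkeeping. By Lemma~\ref{lem:play_allow} every almost-sure winning strategy plays only allowed actions at reachable winning belief-supports, hence $\almost_{\game}(\target) \subseteq \mathcal{S}_\allow$ and therefore $\inf_{\straa \in \mathcal{S}_\allow}\expect_{\initd_0}^{\straa}[\sumcost_k] \leq \inf_{\straa \in \almost_{\game}(\target)}\expect_{\initd_0}^{\straa}[\sumcost_k]$. Conversely, by Lemma~\ref{lemm:fin1} we have $\straa^*_k \in \almost_{\game}(\target)$, so $\inf_{\straa \in \almost_{\game}(\target)}\expect_{\initd_0}^{\straa}[\sumcost_k] \leq \expect_{\initd_0}^{\straa^*_k}[\sumcost_k]$. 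Combining these with the two identities above yields the chain
\[
\inf_{\straa \in \mathcal{S}_\allow}\expect_{\initd_0}^{\straa}[\sumcost_k]
= \expect_{\initd_0}^{\straa^*_k}[\sumcost_k]
\geq \inf_{\straa \in \almost_{\game}(\target)}\expect_{\initd_0}^{\straa}[\sumcost_k]
\geq \inf_{\straa \in \mathcal{S}_\allow}\expect_{\initd_0}^{\straa}[\sumcost_k],
\]
so all three quantities coincide, which establishes $\expect_{\initd_0}^{\straa^*_k}[\sumcost_k] = \inf_{\straa \in \almost_{\game}(\target)}\expect_{\initd_0}^{\straa}[\sumcost_k]$, as claimed.
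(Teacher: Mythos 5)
Your proposal is correct and follows essentially the same route as the paper, which justifies the lemma by the one-line observation that the value iteration's only restriction is to allowed actions, that almost-sure winning strategies play only allowed actions (Lemma~\ref{lem:play_allow}), and that $\straa^*_k$ is itself almost-sure winning (Lemma~\ref{lemm:fin1}). Your write-up merely makes the resulting squeeze through the intermediate class of allowed-action strategies explicit, together with the standard correctness of information-state value iteration that the paper also takes as given.
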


Note that since in the first $k$ steps $\straa^*_k$ plays as $\straa^\mathsf{FO}_k$ 
we have the following proposition.

\begin{proposition}\label{prop:fin}
For all $k \in \nat$ we have 
$\expect_{\initd_0}^{\straa^*_k}[\sumcost_k] = 
\expect_{\initd_0}^{\straa^\mathsf{FO}_k}[\sumcost_k]$.
\end{proposition}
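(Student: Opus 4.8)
The plan is to observe that the proposition is essentially a bookkeeping fact: the finite-length cost $\sumcost_k$ reads only a bounded prefix of the play, and on every such prefix the strategies $\straa^*_k$ and $\straa^\mathsf{FO}_k$ induce the same probability by construction, so the two expectations coincide term by term. Concretely, $\sumcost_k(\rho)=\sum_{i=0}^{k}\cost(s_i,a_i)$ is a function of the finite prefix of $\rho$ produced during the finite-horizon phase alone; hence it is measurable with respect to the (finite) sub-$\sigma$-algebra generated by the cones $\Cone(w)$ of the prefixes $w$ reaching that horizon, and for any strategy $\straa$ the expectation is the finite weighted sum $\sum_w \prb_{\initd_0}^{\straa}(\Cone(w))\cdot\sumcost_k(w)$ ranging over those prefixes.

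First I would show that the two strategies assign the same measure to each relevant cone. By the definition of $\straa^*_k$, throughout the first $k$ steps it plays exactly as $\straa^\mathsf{FO}_k$; only afterwards does it switch to $\straa_\allow$. Unwinding the inductive cone-measure definition $\mu_{s}^{\straa}(\Cone(w\cdot a\cdot s'))=\mu_{s}^{\straa}(\Cone(w))\cdot\straa(w)(a)\cdot\trans(\last(w),a)(s')$, every strategy-dependent factor $\straa(w)(a)$ entering the measure of such a prefix is identical for $\straa^*_k$ and $\straa^\mathsf{FO}_k$, while the transition probabilities $\trans$ are fixed by $\game$ and independent of the strategy. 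Consequently $\prb_{\initd_0}^{\straa^*_k}(\Cone(w))=\prb_{\initd_0}^{\straa^\mathsf{FO}_k}(\Cone(w))$ for every prefix $w$ on which $\sumcost_k$ depends.

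Combining the two steps yields the claim: since $\sumcost_k$ is determined by these prefixes and the cone measures coincide on them, the finite sums defining $\expect_{\initd_0}^{\straa^*_k}[\sumcost_k]$ and $\expect_{\initd_0}^{\straa^\mathsf{FO}_k}[\sumcost_k]$ are equal summand by summand, establishing the equality.

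I do not anticipate a real obstacle, since no feature of $\straa_\allow$ or of the switch is used beyond the fact that it occurs only after the window read by $\sumcost_k$. The single point to pin down is the indexing: one must check that the horizon on which the two strategies are required to agree is exactly the horizon that $\sumcost_k$ inspects, so that the agreement of $\straa^*_k$ and $\straa^\mathsf{FO}_k$ on the finite-horizon phase already fixes every prefix contributing to $\sumcost_k$. Once this alignment is made explicit, the argument is complete.
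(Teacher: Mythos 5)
Your argument is correct and is exactly the reasoning the paper relies on: the paper states the proposition as an immediate consequence of the fact that $\straa^*_k$ plays as $\straa^\mathsf{FO}_k$ for the first $k$ steps, and your cone-measure computation is just the explicit formalization of that observation. The indexing point you flag (whether the window on which the strategies agree covers the last action $a_k$ read by $\sumcost_k$) is a convention issue inherited from the paper's definition of $\straa^*_k$ rather than a gap in your proof, and resolving it as you propose completes the argument.
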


\subsection{Approximation algorithm}
In this section we will show that for all $\epsilon>0$ there exists a bound 
$k$ such that the strategy $\straa^*_k$ approximates $\optCost$ within $\epsilon$.
First we consider an upper bound on $\optCost$.

\noindent{\bf Bound $\upper$.}
We consider an upper bound $\upper$ on the expected total cost of the strategy $\straa_\allow$ 
starting in an arbitrary state $s \in U$ with the initial belief-support $U \in \beliefsetwin(\game,\target)$. 
Given a belief-support $U \in \beliefsetwin(\game,\target)$ and a state $s\in U$ let $T_{\allow}(s,U)$ 
denote the expected total cost of the strategy $\straa_\allow$ starting in the state $s$ with 
the initial belief-support $U$. Then the upper bound is defined as 
$\upper = \max_{U \in \beliefsetwin(\game,\target), s \in U} T_{\allow}(s,U)$.
As the strategy $\straa_\allow$ is in $\almost_\game(\target)$ it follows that the value $\upper$ is also 
an upper bound for the optimal cost $\optCost$.
Observe that by Lemma~\ref{lemm:doubexp} it follows that $\upper$ is at most double exponential in the size
of the POMDP.

\begin{lemma}\label{lemm:upper1}
We have $\optCost \leq \upper$.
\end{lemma}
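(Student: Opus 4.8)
The plan is to exhibit $\straa_\allow$ as a concrete almost-sure winning strategy whose value is at most $\upper$, so that the infimum defining $\optCost$ is automatically bounded by $\upper$. First I would invoke Lemma~\ref{lem:allallowwins}, which guarantees that $\straa_\allow$ is almost-sure winning from every belief-support in $\beliefsetwin(\game,\target)$, in particular from the initial belief-support $\supp(\initd_0)$ (which lies in $\beliefsetwin(\game,\target)$ whenever the problem is non-degenerate, i.e.\ whenever an almost-sure winning strategy exists at all). Hence $\straa_\allow \in \almost_\game(\target)$, and since $\optCost = \inf_{\straa \in \almost_\game(\target)} \val(\straa)$ is an infimum over a set containing $\straa_\allow$, we immediately obtain $\optCost \leq \val(\straa_\allow)$.

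The remaining step is to show $\val(\straa_\allow) \leq \upper$. Writing $U_0 = \supp(\initd_0)$ for the initial belief-support, I would expand the value using the decomposition of the expectation over the initial distribution from the Definitions section: $\val(\straa_\allow) = \expect_{\initd_0}^{\straa_\allow}[\sumcost] = \sum_{s \in U_0} \initd_0(s) \cdot \expect_s^{\straa_\allow}[\sumcost]$. The key observation is that, since $\straa_\allow$ is belief-based with initial memory $m_0 = U_0$, for each starting state $s \in U_0$ the quantity $\expect_s^{\straa_\allow}[\sumcost]$ equals exactly $T_\allow(s,U_0)$: the strategy begins with belief-support $U_0$ no matter which concrete state $s \in U_0$ the play actually starts in. Each such term then satisfies $T_\allow(s,U_0) \leq \upper$ by the definition of $\upper$ as the maximum of $T_\allow(s,U)$ over all $U \in \beliefsetwin(\game,\target)$ and $s \in U$.

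Combining these, and using $\sum_{s \in U_0} \initd_0(s) = 1$, I conclude $\val(\straa_\allow) = \sum_{s \in U_0} \initd_0(s) \cdot T_\allow(s,U_0) \leq \upper \cdot \sum_{s \in U_0} \initd_0(s) = \upper$, and therefore $\optCost \leq \val(\straa_\allow) \leq \upper$. The argument is essentially routine once the two ingredients (membership in $\almost_\game(\target)$ and the convex-combination bound) are in place; the only point requiring care, and the main (mild) obstacle, is the identification of the state-wise value $\expect_s^{\straa_\allow}[\sumcost]$ with $T_\allow(s,U_0)$. This hinges on the fact that $\straa_\allow$, viewed as a finite-memory strategy, carries the belief-support as its memory and is initialized with memory $U_0 = \supp(\initd_0)$, so that a run started from any single state $s \in U_0$ is governed by the same sequence of allowed-action distributions as the strategy's behaviour from belief-support $U_0$.
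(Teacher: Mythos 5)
Your proposal is correct and follows essentially the same route as the paper, which simply observes that $\straa_\allow$ is an almost-sure winning strategy (Lemma~\ref{lem:allallowwins}) whose value is bounded by $\upper$ by the definition of $\upper$ as a maximum over $T_\allow(s,U)$, so the infimum defining $\optCost$ is at most $\upper$. Your additional care in identifying $\expect_s^{\straa_\allow}[\sumcost]$ with $T_\allow(s,\supp(\initd_0))$ and averaging over the initial distribution only makes explicit what the paper leaves implicit.
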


\noindent{\bf Key lemma.} We will now present our key lemma to obtain the bound on $k$ 
depending on $\epsilon$. 
We start with a few notations.
Given $k \in \nat$, let $\cale_k$ denote the event of reaching the target set within 
$k$ steps, i.e., 
$\cale_k= \set{ (s_0, a_0, s_1, a_1, s_2 \ldots) \in \Omega \mid \exists i \leq k:  s_i \in \target}$;
and $\ov{\cale}_k$ the complement of the event $\cale_k$ that denotes the target set is 
not reached within the first $k$ steps.
Recall that for plays $\rho = (s_0,a_0,s_1,a_1,s_2,a_2,\ldots)$ we have 
$\sumcost_k = \sum_{i=0}^{k}\cost(s_i,a_i)$ and we consider
$\ov{\sumcost}_k = \sum_{i=k+1}^\infty \cost(s_i,a_i)$ the sum of the costs after $k$ steps.
Note that we have $\sumcost= \sumcost_k + \ov{\sumcost}_k$.

\begin{lemma}\label{lemm:upper2}
For $k \in \nat$ consider the strategy $\straa^*_k$ that is obtained by playing 
an optimal finite-horizon strategy $\straa^\mathsf{FO}_k$ for $k$ steps, followed
by strategy $\straa_{\allow}$.
Let $\alpha_k = \prb_{\initd_0}^{\straa^*_k}(\ov{\cale}_k)$ denote the probability 
that the target set is not reached within the first $k$ steps.
We have 
\[
\expect_{\initd_0}^{\straa^*_k}[\sumcost] 
\leq \expect_{\initd_0}^{\straa^*_k}[\sumcost_k] + \alpha_k \cdot \upper 
\]
\end{lemma}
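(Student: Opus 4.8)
The plan is to split the total cost at the horizon $k$ and bound the resulting tail. Using the identity $\sumcost = \sumcost_k + \ov{\sumcost}_k$ recorded just before the statement, linearity of expectation gives
\[
\expect_{\initd_0}^{\straa^*_k}[\sumcost] = \expect_{\initd_0}^{\straa^*_k}[\sumcost_k] + \expect_{\initd_0}^{\straa^*_k}[\ov{\sumcost}_k],
\]
so the whole lemma reduces to proving the single tail bound $\expect_{\initd_0}^{\straa^*_k}[\ov{\sumcost}_k] \le \alpha_k \cdot \upper$.

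First I would observe that the tail cost $\ov{\sumcost}_k$ vanishes on the event $\cale_k$: since the target states are absorbing and carry cost $0$ on all outgoing transitions, a play that has reached $\target$ by step $k$ incurs cost $0$ at every step $i > k$, hence $\ov{\sumcost}_k = 0$ on $\cale_k$ (indeed $\cale_k$ holds iff the state at step $k$ lies in $\target$). Thus only plays in $\ov{\cale}_k$ contribute to the tail. To bound that contribution I would condition on the configuration reached at step $k$, i.e.\ on the pair $(s_k, U_k)$ consisting of the current state and the current belief-support. By Lemma~\ref{lemm:fin1} the strategy $\straa^*_k$ is almost-sure winning and plays only allowed actions during the first $k$ steps, so every reachable belief-support $U_k$ lies in $\beliefsetwin(\game,\target)$ and $s_k \in U_k$; hence $T_{\allow}(s_k,U_k)$ is defined and bounded above by $\upper$. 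After step $k$ the strategy $\straa^*_k$ plays exactly as the belief-based stationary strategy $\straa_{\allow}$, whose continuation depends only on the current configuration; therefore the conditional expectation of the full cost accrued by $\straa_{\allow}$ from step $k$ onward equals $T_{\allow}(s_k,U_k)$, and since all costs are nonnegative the tail $\ov{\sumcost}_k$ (which merely drops the single step-$k$ summand) has conditional expectation at most $T_{\allow}(s_k,U_k) \le \upper$.

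Putting these together, I would write
\[
\expect_{\initd_0}^{\straa^*_k}[\ov{\sumcost}_k] = \sum_{(s,U):\, s \notin \target} \prb_{\initd_0}^{\straa^*_k}(s_k = s, U_k = U)\cdot \expect_{\initd_0}^{\straa^*_k}[\ov{\sumcost}_k \mid s_k = s, U_k = U] \le \upper \cdot \alpha_k,
\]
where the equality uses that the $s \in \target$ configurations contribute nothing, and the inequality uses the per-configuration bound of the previous paragraph together with $\prb_{\initd_0}^{\straa^*_k}(\ov{\cale}_k) = \alpha_k$. The main obstacle is the middle step: making the ``conditioned on $(s_k,U_k)$, the future behaves like $\straa_{\allow}$ started afresh'' argument precise. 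This is where I would lean on the fact that $\straa_{\allow}$ is belief-support-based \emph{stationary}, so its future play is a function of the current configuration alone and not of the history used by $\straa^\mathsf{FO}_k$ during the first $k$ steps; the index shift between $\ov{\sumcost}_k$ (summing from $k+1$) and $T_{\allow}$ (counting the cost from step $k$) is harmless precisely because costs are nonnegative, so dropping the step-$k$ summand only weakens the bound in the correct direction.
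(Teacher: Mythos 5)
Your proposal is correct and follows essentially the same route as the paper: split $\sumcost$ into $\sumcost_k + \ov{\sumcost}_k$, observe that the tail vanishes on $\cale_k$ because target states are absorbing with cost $0$, and bound the tail on $\ov{\cale}_k$ by $\upper$ using that $\straa^*_k$ switches to the belief-based stationary strategy $\straa_{\allow}$ after step $k$. Your conditioning on the configuration $(s_k,U_k)$ is just a slightly more explicit way of justifying the final inequality that the paper states directly.
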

\begin{proof}
We have 
\[
\begin{array}{rcl}
\expect_{\initd_0}^{\straa^*_k}[\sumcost] 
& = &
\prb_{\initd_0}^{\straa^*_k}(\cale_k) \cdot \expect_{\initd_0}^{\straa^*_k}[\sumcost \mid \cale_k] 
+ 
\prb_{\initd_0}^{\straa^*_k}(\ov{\cale}_k) \cdot \expect_{\initd_0}^{\straa^*_k}[\sumcost \mid \ov{\cale}_k] \\[2ex]
& = & 
\prb_{\initd_0}^{\straa^*_k}(\cale_k) \cdot \expect_{\initd_0}^{\straa^*_k}[\sumcost \mid \cale_k] 
+ 
\prb_{\initd_0}^{\straa^*_k}(\ov{\cale}_k) \cdot \expect_{\initd_0}^{\straa^*_k}[(\sumcost_k + \ov{\sumcost}_k) \mid \ov{\cale}_k] 
\\[2ex]
& = & 
\prb_{\initd_0}^{\straa^*_k}(\cale_k) \cdot \expect_{\initd_0}^{\straa^*_k}[\sumcost \mid \cale_k] 
+ 
\prb_{\initd_0}^{\straa^*_k}(\ov{\cale}_k) \cdot \expect_{\initd_0}^{\straa^*_k}[\sumcost_k \mid \ov{\cale}_k ] +
\prb_{\initd_0}^{\straa^*_k}(\ov{\cale}_k) \cdot \expect_{\initd_0}^{\straa^*_k}[\ov{\sumcost}_k \mid \ov{\cale}_k] \\[2ex]
& = & 
\prb_{\initd_0}^{\straa^*_k}(\cale_k) \cdot \expect_{\initd_0}^{\straa^*_k}[\sumcost_k \mid \cale_k] 
+ 
\prb_{\initd_0}^{\straa^*_k}(\ov{\cale}_k) \cdot \expect_{\initd_0}^{\straa^*_k}[\sumcost_k \mid \ov{\cale}_k ] +
\prb_{\initd_0}^{\straa^*_k}(\ov{\cale}_k) \cdot \expect_{\initd_0}^{\straa^*_k}[\ov{\sumcost}_k \mid \ov{\cale}_k] \\[2ex] 
& = & 
\expect_{\initd_0}^{\straa^*_k}[\sumcost_k] + 
\prb_{\initd_0}^{\straa^*_k}(\ov{\cale}_k) \cdot \expect_{\initd_0}^{\straa^*_k}[\ov{\sumcost}_k \mid \ov{\cale}_k]
\leq 
\expect_{\initd_0}^{\straa^*_k}[\sumcost_k] + \alpha_k \cdot \upper.
\end{array}
\]
The first equality is obtained by splitting with respect to the complementary events $\cale_k$ and $\ov{\cale}_k$;
the second equality is obtained by writing $\sumcost= \sumcost_k + \ov{\sumcost}_k$; and
the third equality is by linearity of expectation. 
The fourth equality is obtained as follows: since all outgoing transitions from target states have cost 
zero, it follows that given the event $\cale_k$ we have $\sumcost=\sumcost_k$.
The fifth equality is obtained by combining the first two terms.
The final inequality is obtained as follows: from the $(k+1)$-th step the 
strategy plays as $\straa_{\allow}$ and the expected total cost given 
$\straa_{\allow}$ is bounded by $\upper$.
The result follows.
\hfill\qed
\end{proof}

\begin{lemma}\label{lemm:upper3}
For $k \in \nat$ consider the strategy $\straa^*_k$ and $\alpha_k$ 
(as defined in Lemma~\ref{lemm:upper2}).
The following assertions hold:
\[
(1) \ 
\expect_{\initd_0}^{\straa^*_k}[\sumcost_k] \leq \optCost; \quad 
\text{and} \quad 
(2)\ \alpha_k \leq \frac{\optCost}{k}.
\]
\end{lemma}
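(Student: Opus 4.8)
The plan is to prove the two assertions separately, using the finite-horizon optimality of $\straa^*_k$ (Lemma~\ref{lemm:fin2}) for the first and a Markov-style tail argument for the second.

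\textbf{Assertion (1).} The plan is to start from Lemma~\ref{lemm:fin2}, which gives $\expect_{\initd_0}^{\straa^*_k}[\sumcost_k] = \inf_{\straa \in \almost_{\game}(\target)} \expect_{\initd_0}^{\straa}[\sumcost_k]$. Since all transition costs are positive (hence non-negative), for every play $\rho$ we have $\sumcost_k(\rho) = \sum_{i=0}^{k}\cost(s_i,a_i) \leq \sum_{i=0}^{\infty}\cost(s_i,a_i) = \sumcost(\rho)$, because truncating a sum of non-negative terms can only decrease it. Taking expectations under an arbitrary almost-sure winning strategy $\straa$ preserves this inequality, so $\expect_{\initd_0}^{\straa}[\sumcost_k] \leq \expect_{\initd_0}^{\straa}[\sumcost]$. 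Therefore
\[
\inf_{\straa \in \almost_{\game}(\target)} \expect_{\initd_0}^{\straa}[\sumcost_k]
\leq \inf_{\straa \in \almost_{\game}(\target)} \expect_{\initd_0}^{\straa}[\sumcost] = \optCost,
\]
and combining with Lemma~\ref{lemm:fin2} yields $\expect_{\initd_0}^{\straa^*_k}[\sumcost_k] \leq \optCost$, as required.

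\textbf{Assertion (2).} The plan is to lower-bound the cost $\optCost$ by a multiple of the probability $\alpha_k$ of not reaching the target within $k$ steps, exploiting that every non-target transition costs at least~$1$. Fix any almost-sure winning strategy $\straa$. On the event $\ov{\cale}_k$ (the target is not reached within the first $k$ steps), none of the first $k+1$ states $s_0,\ldots,s_k$ lies in $\target$, so each of the costs $\cost(s_0,a_0),\ldots,\cost(s_{k-1},a_{k-1})$ is at least~$1$; hence $\sumcost(\rho) \geq k$ for every such play. This gives the pointwise bound $\sumcost \geq k \cdot \mathbf{1}_{\ov{\cale}_k}$, and taking expectations under $\straa$ yields $\expect_{\initd_0}^{\straa}[\sumcost] \geq k \cdot \prb_{\initd_0}^{\straa}(\ov{\cale}_k)$. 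Applying this to $\straa = \straa^*_k$ and using $\expect_{\initd_0}^{\straa^*_k}[\sumcost] \geq \optCost$ is the wrong direction, so instead the argument must compare against the infimum: since $\straa^*_k$ minimizes the finite-horizon cost, I would argue that any strategy achieving a finite expected total cost close to $\optCost$ also has small $\alpha_k$, and optimize over strategies to conclude $\alpha_k \leq \optCost/k$.

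\textbf{Main obstacle.} The delicate point in Assertion~(2) is relating $\alpha_k$, defined specifically for the strategy $\straa^*_k$, to the global quantity $\optCost$. The cleanest route I expect to take is to observe that $\straa^*_k$ agrees with the finite-horizon optimal strategy on the first $k$ steps, so $\prb_{\initd_0}^{\straa^*_k}(\ov{\cale}_k)$ equals the corresponding probability under $\straa^\mathsf{FO}_k$, and then to bound this using $k \cdot \alpha_k \leq \expect_{\initd_0}^{\straa^*_k}[\sumcost_k] \leq \optCost$, where the first inequality reuses the pointwise estimate $\sumcost_k \geq k \cdot \mathbf{1}_{\ov{\cale}_k}$ (valid since on $\ov{\cale}_k$ the first $k$ costs are each at least~$1$) and the second is exactly Assertion~(1). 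This chain directly yields $\alpha_k \leq \optCost/k$, so the key insight is that the same truncated-cost quantity controls both the value estimate and the tail probability.
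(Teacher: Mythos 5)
Your proof is correct and follows essentially the same route as the paper: assertion (1) via Lemma~\ref{lemm:fin2} together with $\sumcost_k \leq \sumcost$ for non-negative costs, and assertion (2) via the pointwise bound $\sumcost_k \geq k \cdot \mathbf{1}_{\ov{\cale}_k}$ combined with assertion (1). The detour in your assertion-(2) discussion (comparing $\expect_{\initd_0}^{\straa}[\sumcost]$ to $\optCost$ in the wrong direction) is correctly abandoned, and the chain you settle on, $k\cdot\alpha_k \leq \expect_{\initd_0}^{\straa^*_k}[\sumcost_k] \leq \optCost$, is exactly the paper's argument.
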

\begin{proof}
We prove both the inequalities below.
\begin{enumerate}
\item For $k \in \nat$ we have that
\[
\expect_{\initd_0}^{\straa^*_k}[\sumcost_k] \leq 
\inf_{\straa \in \almost_{\game}(\target)} \expect_{\initd_0}^{\straa}[\sumcost_k]
\leq \inf_{\straa \in \almost_{\game}(\target)} \expect_{\initd_0}^{\straa}[\sumcost] =\optCost
\]
The first inequality is due to Lemma~\ref{lemm:fin2} and the second inequality 
follows from the fact that $\sumcost_k \leq \sumcost$ 
for non-negative weights.

\item Note that $\alpha_k$ denotes the probability that the target state is not reached within the first
$k$ steps. 
Since the costs are positive integers (for all transitions other than the target state transitions), 
given the event $\ov{\cale}_k$ the total cost for $k$ steps is at least $k$.
Hence it follows that $\expect_{\initd_0}^{\straa^*_k}[\sumcost_k] \geq k \cdot \alpha_k$.
Thus it follows from the first inequality that we have $\alpha_k \leq \frac{\optCost}{k}$.
\end{enumerate}
The desired result follows.
\hfill\qed
\end{proof}

\noindent{\bf Approximation algorithms.} Our approximation algorithm is presented as 
Algorithm~\ref{alg:Approx}.

\begin{algorithm}[h!]
  \caption{{\sc ApproxAlgo} \textbf{Input:} POMDP, $\epsilon >0 $}
  \label{alg:Approx}
\begin{algorithmic}[1]
\State $k \gets 1$
\State $\straa_\allow, \upper \gets$ Compute $\straa_\allow$ and $\upper$ \Comment See Remark~\ref{rem:straa_comp}
\State $\straa^\mathsf{FO}_k \gets $ Finite-horizon value iteration for horizon $k$
 restricted to allowed actions \Comment See Section~\ref{subsec:optimal}
\State $T_k \gets \expect_{\initd_0}^{\straa^\mathsf{FO}_k}[\sumcost_k]$
\State $\alpha_k \gets \prb_{\initd_0}^{\straa^\mathsf{FO}_k}(\ov{\cale}_k) $  \Comment Note that $\prb_{\initd_0}^{\straa^*_k}(\ov{\cale}_k)=  
\prb_{\initd_0}^{\straa^\mathsf{FO}_k}(\ov{\cale}_k)$
\State Add. approx.: \textbf{if} {$\alpha_k \cdot \upper \leq \epsilon$} \textbf{then goto line:} $10$
\State Mult. approx.: \textbf{if} {$\alpha_k \cdot \upper \leq T_k \cdot \epsilon$} \textbf{then goto line:} $10$
\State $k \gets k+1$
\State \textbf{goto line:} $3$
\State \textbf{return} Strategy $\straa^*_k$ obtained by playing 
$\straa^\mathsf{FO}_k$ for $k$-steps followed by $\straa_{\allow}$
\end{algorithmic}
\end{algorithm}

\smallskip\noindent{\bf Correctness and bound on iterations.}
Observe that by Proposition~\ref{prop:fin} we have 
$T_k
=\expect_{\initd_0}^{\straa^\mathsf{FO}_k}[\sumcost_k]
=\expect_{\initd_0}^{\straa^*}[\sumcost_k]$.
Thus by Lemma~\ref{lemm:upper2} and Lemma~\ref{lemm:upper3} (first inequality) 
we have 
$\expect_{\initd_0}^{\straa^*}[\sumcost] \leq T_k + \alpha_k \cdot \upper \leq \optCost + 
\alpha_k \cdot \upper$ (since $T_k \leq \optCost$).
Thus if $\alpha_k \cdot \upper \leq \epsilon$ we obtain an additive approximation. 
If $\alpha_k \cdot \upper \leq T_k \cdot \epsilon$, then 
we have  
$\expect_{\initd_0}^{\straa^*}[\sumcost] \leq T_k + \alpha_k \cdot \upper \leq T_k\cdot (1+\epsilon)
\leq \optCost\cdot(1+\epsilon)$; and we obtain an multiplicative approximation.
This establishes the correctness Algorithm~\ref{alg:Approx}.
Finally, we present the theoretical upper bound on $k$ that ensures stopping for 
the algorithm.
By Lemma~\ref{lemm:upper3} (second inequality) we have $\alpha_k \leq \frac{\optCost}{k} \leq \frac{\upper}{k}$.
Thus $k \geq \frac{\upper^2}{\epsilon}$ ensures that $\alpha_k \cdot \upper \leq \epsilon$, and 
the algorithm stops both for additive and multiplicative approximation.  
We now summarize the main results of this section.

\begin{theorem}
In POMDPs with positive costs, the additive and multiplicative approximation problems for the 
optimal cost $\optCost$ are decidable.
Algorithm~\ref{alg:Approx} computes the approximations using finite-horizon optimal strategy computations
and requires at most double-exponentially many iterations; and there exists POMDPs where 
double-exponentially many iterations are required.
\end{theorem}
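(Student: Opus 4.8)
The plan is to assemble the theorem from the lemmas of this section in three steps: correctness of the two stopping criteria, a double-exponential upper bound on the number of iterations (which yields decidability), and a matching double-exponential lower bound via the family $\famPOMDP(n)$.

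For correctness I would first invoke Lemma~\ref{lemm:fin1} to guarantee that the returned strategy $\straa^*_k$ is almost-sure winning, so that its value $\val(\straa^*_k)=\expect_{\initd_0}^{\straa^*_k}[\sumcost]$ is well defined. I would then combine Lemma~\ref{lemm:upper2}, which gives $\expect_{\initd_0}^{\straa^*_k}[\sumcost]\leq \expect_{\initd_0}^{\straa^*_k}[\sumcost_k]+\alpha_k\cdot\upper$, with Proposition~\ref{prop:fin} (so that the computed quantity $T_k$ equals $\expect_{\initd_0}^{\straa^*_k}[\sumcost_k]$) and with Lemma~\ref{lemm:upper3}(1) (so that $T_k\leq\optCost$). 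In the additive case the test $\alpha_k\cdot\upper\leq\epsilon$ then yields $\val(\straa^*_k)\leq T_k+\epsilon\leq\optCost+\epsilon$; in the multiplicative case the test $\alpha_k\cdot\upper\leq T_k\cdot\epsilon$ yields $\val(\straa^*_k)\leq T_k\cdot(1+\epsilon)\leq\optCost\cdot(1+\epsilon)$. Hence whenever the algorithm halts, its output meets the requested guarantee.

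For termination and the iteration bound I would use Lemma~\ref{lemm:upper3}(2) together with Lemma~\ref{lemm:upper1}, giving $\alpha_k\leq\optCost/k\leq\upper/k$ and so $\alpha_k\cdot\upper\leq\upper^2/k$. Thus once $k\geq\upper^2/\epsilon$ the additive criterion is satisfied; and since all non-target costs are positive integers, a non-target initial state forces the first transition to cost at least $1$, whence $T_k\geq 1$ and the additive test being met also forces the multiplicative test (the trivial case $s_0\in\target$ halts already at $k=1$). By Lemma~\ref{lemm:doubexp} the bound $\upper$ is at most double-exponential in the size of the POMDP, so $\upper^2/\epsilon$ is double-exponential, which bounds the number of iterations, as $k$ starts at $1$ and increases by $1$ per iteration. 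Termination together with correctness establishes decidability of both approximation problems.

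The matching lower bound on iterations I would derive from the family $\famPOMDP(n)$ of Lemma~\ref{lem:lower_bound}. The crucial observation is that, because the sub-POMDP loops have the distinct prime lengths $p(1),\ldots,p(n)$ and share a single observation $z$, any almost-sure winning strategy may play $\#$ toward $\targetstate$ only after $a$ has been played for $p^*_n-1$ consecutive steps; hence the target is simply unreachable in fewer than $p^*_n+1$ steps, giving $\alpha_k=1$ for every $k<p^*_n+1$. Fixing any $\epsilon<\upper$ (legitimate since $\upper\geq 1$ grows double-exponentially with $n$ while $\epsilon$ is fixed), the test $\alpha_k\cdot\upper\leq\epsilon$ cannot succeed before $k$ reaches $p^*_n$, which is double-exponential in $n$ while the size of $\famPOMDP(n)$ is polynomial in $p^+_n$. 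Thus the algorithm performs double-exponentially many iterations on this family. The only genuinely content-bearing step is this last early-unreachability argument, which rests on the prime-length and shared-observation structure already analysed in Lemma~\ref{lem:lower_bound}; everything else is a direct composition of the preceding lemmas.
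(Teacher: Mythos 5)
Your treatment of correctness (via Lemma~\ref{lemm:fin1}, Lemma~\ref{lemm:upper2}, Proposition~\ref{prop:fin} and Lemma~\ref{lemm:upper3}(1)) and of termination (via Lemma~\ref{lemm:upper3}(2), Lemma~\ref{lemm:upper1} and the double-exponential bound on $\upper$) is exactly the paper's argument; your extra observation that $T_k\geq 1$ makes the additive test imply the multiplicative one is a harmless refinement of a point the paper leaves implicit.

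The gap is in the iteration lower bound. You argue that the stopping test cannot succeed before $k$ reaches $p^*_n$ and assert that $p^*_n$ is double-exponential. It is not: $p^*_n=\prod_{i=1}^n p(i)=2^{\Theta(n\log n)}$ is single-exponential in $n$, while the size of $\famPOMDP(n)$ is polynomial in $p^+_n=\Theta(n^2\log n)$, so $p^*_n$ is at most single-exponential (indeed sub-exponential) in the size of the POMDP. In Lemma~\ref{lem:lower_bound} the double exponential comes from the additional factor $2^{p^*_n}$ --- each window of $p^*_n$ steps reaches $\target$ with probability at most $2^{-p^*_n}$ --- not from the loop length itself. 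Hence establishing $\alpha_k=1$ for $k<p^*_n$ yields only a single-exponential lower bound on the number of iterations, which does not prove the last clause of the theorem. To repair the argument along your lines you would have to show that $\alpha_k$ remains bounded away from the threshold until $k$ is of order $2^{p^*_n}$ (for instance $\alpha_k\geq 1-k\cdot 2^{-p^*_n}$, since each step can complete at most one attempt succeeding with probability at most $2^{-p^*_n}$). Alternatively, and more simply, the lemmas already in place give the bound directly: if the algorithm stops at iteration $k$, the returned strategy has value at most $T_k+\epsilon\leq k\cdot\cost_{\max}+\epsilon$ (with $\cost_{\max}=1$ for this family), whereas every almost-sure winning strategy has value at least $\optCost\geq 2^{2^{q(n)}}$ by Lemma~\ref{lem:lower_bound}; hence $k\geq 2^{2^{q(n)}}-\epsilon$, and similarly $k\geq \optCost/(1+\epsilon)$ in the multiplicative case.
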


\begin{remark}
Note that though the theoretical upper bound $k$ on the number of iterations 
$\frac{\upper^2}{\epsilon}$ is double exponential in the worst case, in practical
examples of interest the stopping criteria is expected to be satisfied in 
much fewer iterations.
%
\end{remark}

\begin{remark}\label{rem:almostsure}
We remark that if we consider POMDPs with positive costs, 
then considering almost-sure strategies is not a restriction.
For every strategy that is not almost-sure winning, with positive
probability the target set is not reached, and since all costs
are positive, the expected cost is infinite. 
If every strategy is not almost-sure winning (i.e., there exists no almost-sure 
winning strategy to reach the target set from the starting state), 
then the expected cost is infinite, and if there exists an almost-sure winning strategy, 
we approximate the optimal cost. 
Thus our result is applicable to \emph{all} POMDPs with positive costs. 
A closely related work to ours is Goal-POMDPs, and the solution of Goal-POMDPs 
applies to the class of POMDPs where the target state is reachable from every 
state (see~\cite[line-3, right column page~1]{BG09} for the restriction of Goal-MDPs
and the solution of Goal-POMDPs is reduced to Goal-MDPs).
For example, in the following Section~\ref{sec:experiment}, the first three examples for 
experimental results do not satisfy the restriction of Goal-POMDPs.
\end{remark}

\newcommand{\goodObs}{\mathit{Good}}
\newcommand{\badObs}{\mathit{Bad}}
\newcommand{\pos}{\mathit{Position}}
\newcommand{\rockType}{\mathit{RockType}}
\newcommand{\sample}{\mathit{sample}}
\newcommand{\checkAction}{\mathit{check}}

\section{Experimental Results}\label{sec:experiment}

\noindent{\em Implementation.}
We have implemented Algorithm~\ref{alg:Approx}. 
In principle our algorithm suggests the following: 
First, compute the almost-sure winning belief-supports,  
the set of allowed actions, and $\straa_{\allow}$; and 
then compute finite-horizon value iteration restricted to 
allowed actions. 
An important feature of our algorithm is its flexibility that 
any finite-horizon value iteration algorithm can be used for 
our purpose.
We have implemented our approach where we first implement the 
computation of almost-sure winning belief-supports, allowed actions, and 
$\straa_{\allow}$; 
and for the finite-horizon value iteration 
(Step~3 of Algorithm~\ref{alg:Approx}) we implement two approaches.
The first approach is the exact finite-horizon value iteration using 
a modified version of POMDP-Solve~\cite{POMDPSolve};
and the second approach is an approximate finite-horizon value iteration 
using a modified version of RTDP-Bel~\cite{BG09};
and in both the cases our straightforward modification is that the computation 
of the finite-horizon value iteration is restricted to allowed actions and 
almost-sure winning belief-supports.

\smallskip\noindent{\em Examples for experimental results.}
We experimented on several well-known examples of POMDPs. 
The POMDP examples we considered are as follows:
(A)~We experimented with the \emph{Cheese maze} POMDP example which was introduced in~\cite{C92} 
and also studied in~\cite{D00,LCK95,MB05}. 
Along with the standard example, we also considered a larger maze version; and 
considered two cost functions: one that assign cost~1 to all transitions and
the other where the cost of movement on the baseline is assigned cost~2.
(B)~We considered the \emph{Grid} POMDP introduced in~\cite{Rbook95} and 
also studied in~\cite{LCK95,PR95,MB05}. We considered two cost functions:
one where all costs are~1 and the other where transitions in narrow
areas are assigned cost~2.
(C)~We experimented with the robot navigation problem POMDP introduced in~\cite{LCK95}, where
we considered both deterministic transition and a randomized version. We also 
considered two cost functions: one where all costs are assigned~1 and the other where
costs of turning is assigned cost~2.
(D)~We consider the Hallway example from~\cite{LCK95,S04,SS04,BG09}.
(E)~We consider the RockSample example from~\cite{BG09,SS04}.

\smallskip\noindent{\em Discussion on Experimental results.}
Our experimental results are shown in Table~\ref{tab:results}, where we compare
our approach to RTDP-Bel~\cite{BG09}. 
Other approaches such as SARSOP~\cite{KHL08}, anytime POMDP~\cite{PGT03}, ZMDP~\cite{SS04} are 
for discounted setting, and hence are different from our approach.
The RTDP-Bel approach works only for Goal-POMDPs where from every state the goal states
are reachable, and our first five examples do not fall into this category. 
For the first three examples, both of our exact and approximate implementation 
work very efficiently. 
For the other two larger examples, the exact method does not work since POMDP-Solve cannot 
handle large POMDPs, whereas our approximate method gives comparable result to 
RTDP-Bel.
For the exact computation, we consider multiplicative approximation with $\epsilon=0.1$
and report the number of iterations and the time required by the exact computation.
For the approximate computation, we report the time required by the number of trials
specified for the computation of the finite-horizon value iteration.
For the first three examples, the obtained value of the strategies of our approximate 
version closely matches the value of the strategy of the exact computation, and 
for the last two examples, the values of the strategies obtained by our approximate
version closely matches the values of the strategies obtained by RTDP-Bel.

\begin{table*}[t]
\centering
\resizebox{\linewidth}{!}{
\begin{tabular}{|c|c|c|c|ccc|ccc|ccc|}
\toprule
\multirow{2}{*}{Example} & \multirow{2}{*}{\:Costs\:} & \multirow{2}{*}{\: $\vert \states \vert$, $\vert \act \vert$, $\vert \obs \vert$\:} & \multirow{2}{*}{\: $\straa_\allow$ comp.} & \multicolumn{3}{c|}{\textbf{Exact} $\epsilon = 0.1$} & \multicolumn{3}{c|}{\textbf{Approx.}} & \multicolumn{3}{c|}{\textbf{RTDP-Bel}} \\
& &   & &  Iter. & Time & Val.  & Time & Trials & Val.& Time & Trials & Val.\\

\midrule
\multirow{2}{*}{Cheese maze - small}& $\set{1}$ & \multirow{2}{*}{12, 4, 8}   & \multirow{2}{*}{$0.27 \cdot 10^{-3} $s}    & 7 & 0.54s &   4.6         & 0.06s & 12k  &  4.6 & \multicolumn{3}{c|}{$\times$}            \\
& $\set{1,2}$ &  &      & 8 & 0.62s  &    7.2      &  0.06s & 12k & 7.2 & \multicolumn{3}{c|}{$\times$}              \\
\cmidrule{2-13}
\multirow{2}{*}{Cheese maze - large}& $\set{1}$ & \multirow{2}{*}{16, 4, 8}  & \multirow{2}{*}{$0.57 \cdot 10^{-3}$s}    & 9 & 12.18s &    6.4      & 0.29s & 12k & 6.4 & \multicolumn{3}{c|}{$\times$}              \\
& $\set{1,2}$ &       &    & 12 & 16.55s &   10.8      & 0.3s& 12k & 10.8 & \multicolumn{3}{c|}{$\times$}               \\
\cmidrule{2-13}
\multirow{2}{*}{Grid} & $\set{1}$ & \multirow{2}{*}{11, 4, 6}               & \multirow{2}{*}{$0.47 \cdot 10^{-3}$s}   & 6 & 0.33s & 3.18 & 0.2s & 12k & 3.68 & \multicolumn{3}{c|}{$\times$}               \\
& $\set{1,2}$ & & & 10 & 4.21s & 5.37 & 0.21s & 12k & 5.99 & \multicolumn{3}{c|}{$\times$}            \\
\cmidrule{2-13}
\multirow{2}{*}{Robot movement - det.} & $\set{1}$ & \multirow{2}{*}{15, 3, 11}  & \multirow{2}{*}{$0.43 \cdot 10^{-3}$s}                & 9 & 5.67s  & 7.0 & 0.08s & 12k & 7.0& \multicolumn{3}{c|}{$\times$}                \\
                                    & $\set{1,2}$ &    &                 & 8 & 5.01s & 10.0 & 0.08s & 12k & 10.0 & \multicolumn{3}{c|}{$\times$}             \\
\cmidrule{2-13}
\multirow{2}{*}{Robot movement - ran.} & $\set{1}$ & \multirow{2}{*}{15, 3, 11}& \multirow{2}{*}{$0.52 \cdot 10^{-3}$s}              & 10 & 6.64s & 7.25 & 0.08s& 12k & 7.25& \multicolumn{3}{c|}{$\times$}                 \\
                                    & $\set{1,2}$ &    &               & 10 & 6.65s  & 10.35 & 0.04s & 12k & 10.38 &    \multicolumn{3}{c|}{$\times$}           \\
\cmidrule{2-13}
Hallway & $\set{1}$ & 61, 5, 22  & $0.32 \cdot 10^{-1}$s        & \multicolumn{3}{c|}{Timeout 20m.} & 283.88s& 12k & 6.09 & 282.47s & 12k &   6.26              \\
\cmidrule{2-13}
Hallway 2& $\set{1}$ & 94, 5, 17 & $0.58 \cdot 10^{-1}$s        & \multicolumn{3}{c|}{Timeout 20m.} & 414.29s& 14k & 4.69 & 413.21s & 14k &  4.46              \\
\cmidrule{2-13}
RockSample[4,4] & $\set{1,50,100}$ & 257, 9, 2   & $0.05 $s       & \multicolumn{3}{c|}{Timeout 20m.} & 61.23s& 20k & 542.49 & 61.29s & 20k & 546.73                \\
\cmidrule{2-13}
RockSample[5,5] & $\set{1,50,100}$ & 801, 10, 2 & $0.26$s       & \multicolumn{3}{c|}{Timeout 20m.} & 99.13s& 20k & 159.39 & 98.44s & 20k &  161.07              \\
\cmidrule{2-13}
RockSample[5,7] & $\set{1,50,100}$ & 3201, 12, 2   & $4.44 $s       & \multicolumn{3}{c|}{Timeout 20m.} & 427.94s& 20k & 6.02 & 422.61s & 20k & 6.14           \\
\cmidrule{2-13}
RockSample[7,8] & $\set{1,50,100}$ & 12545, 13, 2   & $78.83 $s       & \multicolumn{3}{c|}{Timeout 20m.} & 1106.2s& 20k & 6.31 &  1104.53s & 20k  & 6.39            \\

\midrule
\end{tabular}
}
\caption{Experimental results}
\vspace{-0.5em}
\label{tab:results}
\end{table*}

\smallskip\noindent{\bf Details of the POMDP examples.} 
We now present the details of the POMDP examples.

\begin{enumerate}

\item \emph{Cheese maze:}
The example models a simple maze, where there are four actions {\tt n}, {\tt e}, {\tt s}, {\tt w} that correspond 
to the movement in the four compass directions. 
The POMDP examples are shown in Figure~\ref{fig:cheese_maze_small} and 
Figure~\ref{fig:cheese_maze_large}.
Actions that attempt to move outside of the maze have no effect on the position; otherwise the 
movement is determined deterministically given the action in all four directions. 
In the small version, there are $12$ states and $8$ observations, which correspond to what walls would be seen 
in all four directions that are immediately adjacent to the current location, i.e., states $5,6,$ and $7$ have
the same observation. 
The game starts in a unique initial state that is not depicted in the figure, where all actions lead
to the baseline states  $0$, $1$, $2$, $3$, or $4$ with uniform probability. 
The target state is depicted with a star, and there are also two absorbing trap states depicted with a skull. 
The initial state, the trap states, and the goal state have their own unique observations. 
In the larger variant of the POMDP there are four more states and intuitively
they add a new leftmost branch to the POMDP with a third absorbing trap state at the end. 
The new baseline is formed out of states $0,1,2, \ldots, 6$. 
In the first setting all the costs are $1$, and this represents the number of steps to the 
target state; and in the second setting the cost of any movement on the baseline is $2$ and 
the movement in the branches costs $1$ (which models that baseline exploration is more costly).

\begin{figure}[ht]
  \begin{minipage}[b]{0.40\linewidth} 
    \centering
        \includegraphics[width=5.1cm]{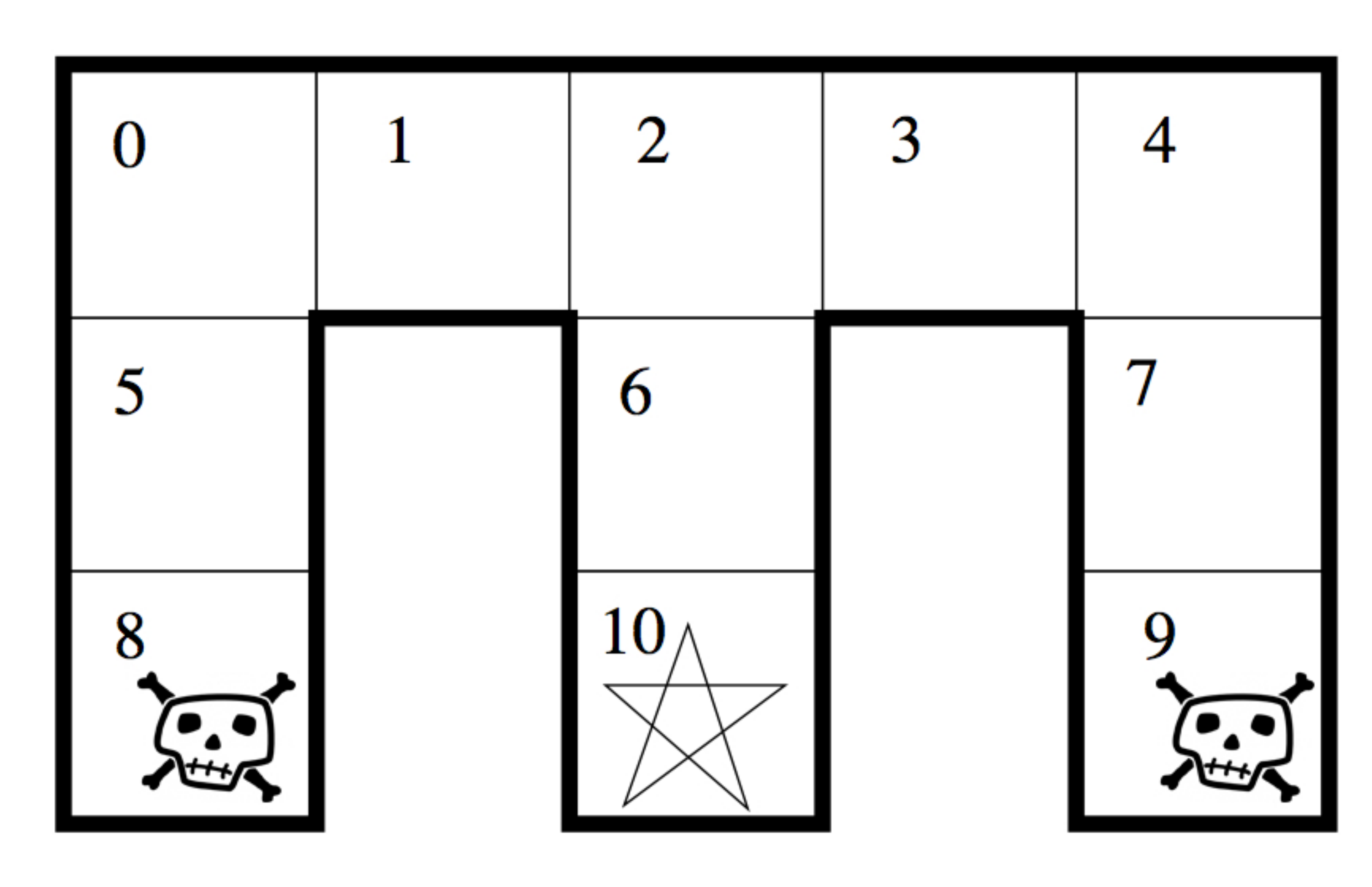}
   \caption{The Cheese Maze - small POMDP }
   \label{fig:cheese_maze_small}
  \end{minipage}
  \begin{minipage}[b]{0.54\linewidth}
    \centering
 \centering
        \includegraphics[width=7cm]{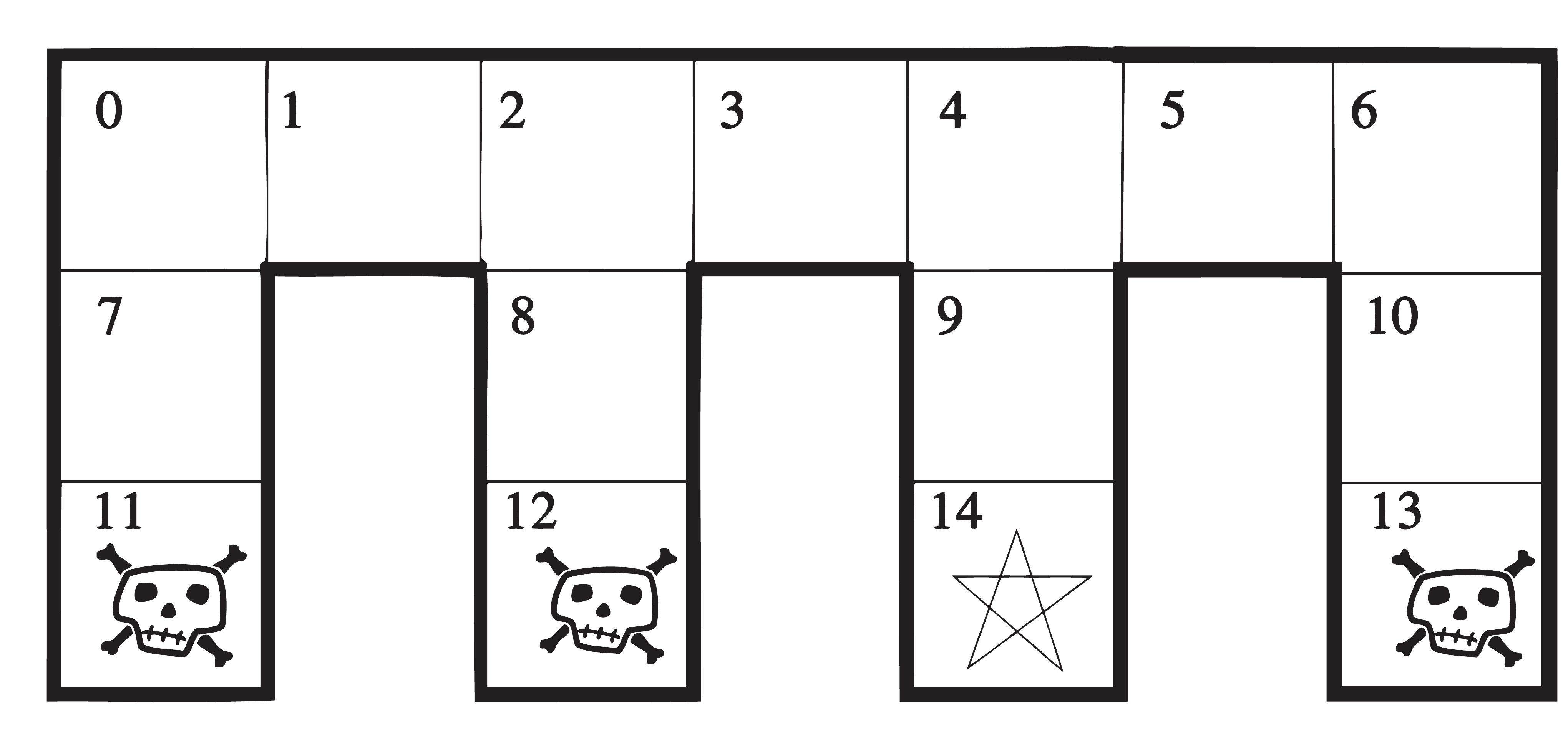}
   \caption{The Cheese Maze - large POMDP }
   \label{fig:cheese_maze_large}
  \end{minipage}
\end{figure}


\item \emph{Grid $4\times3$:}
The Grid POMDP is shown in Figure~\ref{fig:grid} and models a maze with $11$ states: the starting state $0$, one target 
state depicted with a star, and an absorbing trap state that is depicted with a skull. There are 
four actions {\tt n}, {\tt e}, {\tt s}, {\tt w} that correspond to the movement in the four compass directions. 
The movement succeeds only with probability $0.96$ and with probability $0.02$ moves perpendicular to the intended direction. 
Attempts to move outside of the grid have no effect on the position, i.e., playing action {\tt s} from state $0$ will move 
with probability $0.96$ to state $4$; with probability $0.02$ to state $1$, and with probability $0.02$ to state $0$.
There are $6$ observations that correspond to the information from  detectors that can detect whether there are  walls
immediately adjacent to the east and to the  west of the current state. The goal and the absorbing trap state have their 
own observations.
We have again considered the setting where all the costs before reaching the target state are $1$. 
In the second setting we have assigned to movements in the narrow areas of the maze (states $0,1,4,7$, and $8$) cost of all actions to $2$. 
Intuitively, the higher costs compensate for the wall bumps that can make the movement in the 
narrow areas of the maze more predictable. 
\begin{figure}
\begin{center}
\includegraphics[width=7cm]{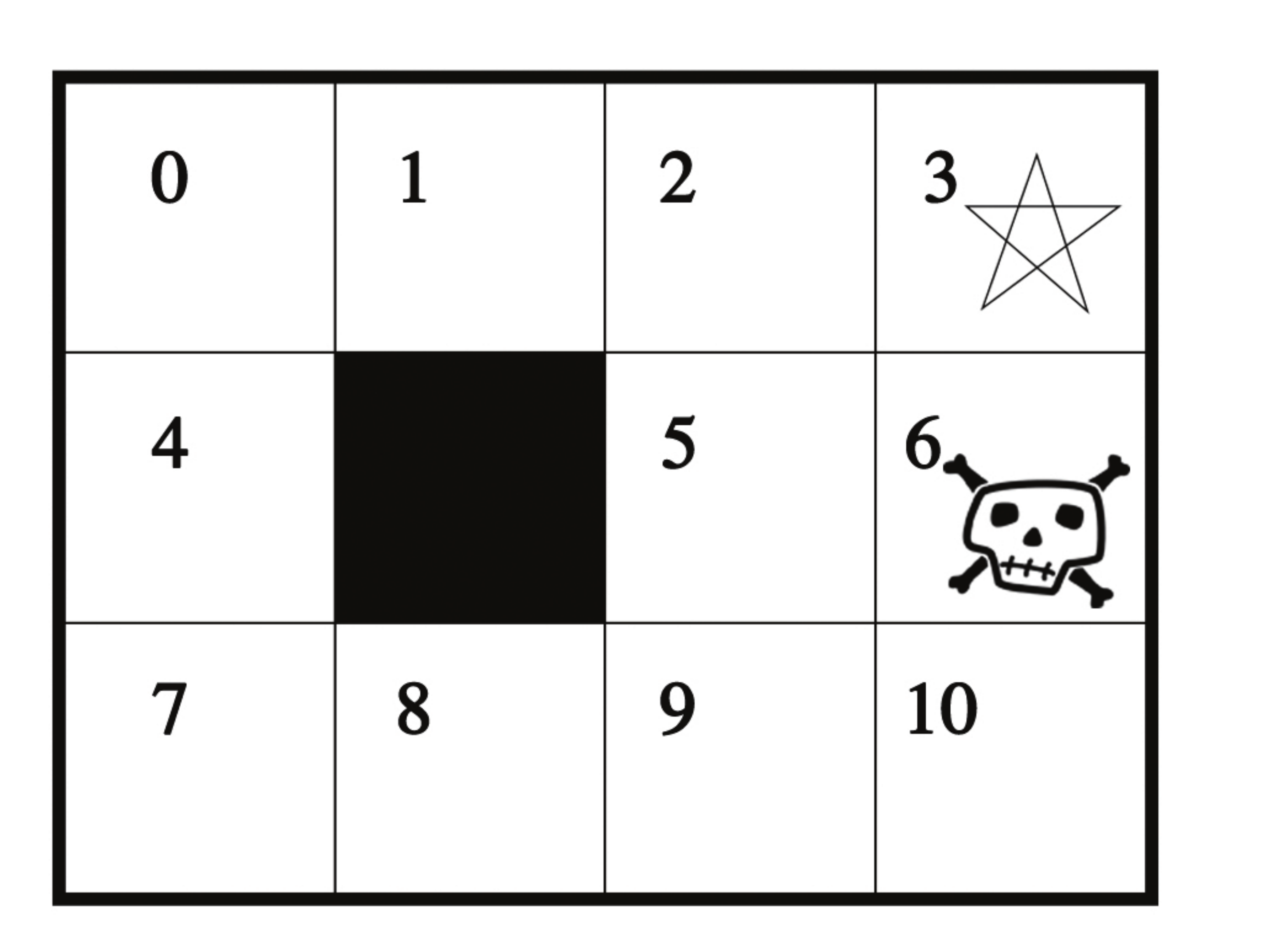}
\caption{The Grid $4\times3$ POMDP}
\label{fig:grid}
\end{center}
\end{figure}

\item \emph{Robot navigation:}
The robot navigation POMDP models the movement of a robot in an environment. 
The robot can be in four possible states: facing north, east, south, and west. 
The environment has states $1$, $2$, $3$, and a final state depicted with a star. 
The robot has three available actions: move forward {\tt f}, turn left {\tt l}, and turn right {\tt r}. 
The original setting of the problem is that all actions are deterministic -- \emph{Robot movement - det.} 
We also consider a variant \emph{Robot movement - ran.}, where the attempt to make an action may fail and
with probability $0.04$ has no effect, i.e., the action does not change the state. 
The POMDP starts in a unique initial state that is not depicted in the figure and
under all actions reaches the state $1$ with the robot facing north, east, south or west with uniform probability. 
Any bump to the wall results in a damaged immobile robot, modeled by an absorbing state not depicted in the figure. 
There are 11 observations that correspond to what would be seen in all four directions that are adjacent to the current location. 
The initial state, the damaged state, and the target state have their own observations.
For both variants we have considered two different cost settings. 
In the first setting all the costs before reaching the target state are $1$. 
In the second setting we assign cost $1$ to the move forward action, and 
cost $2$ to the turn-left and turn-right action
(i.e., turning is more costly than moving forward).

\begin{figure}
\begin{center}
\includegraphics[width=3cm]{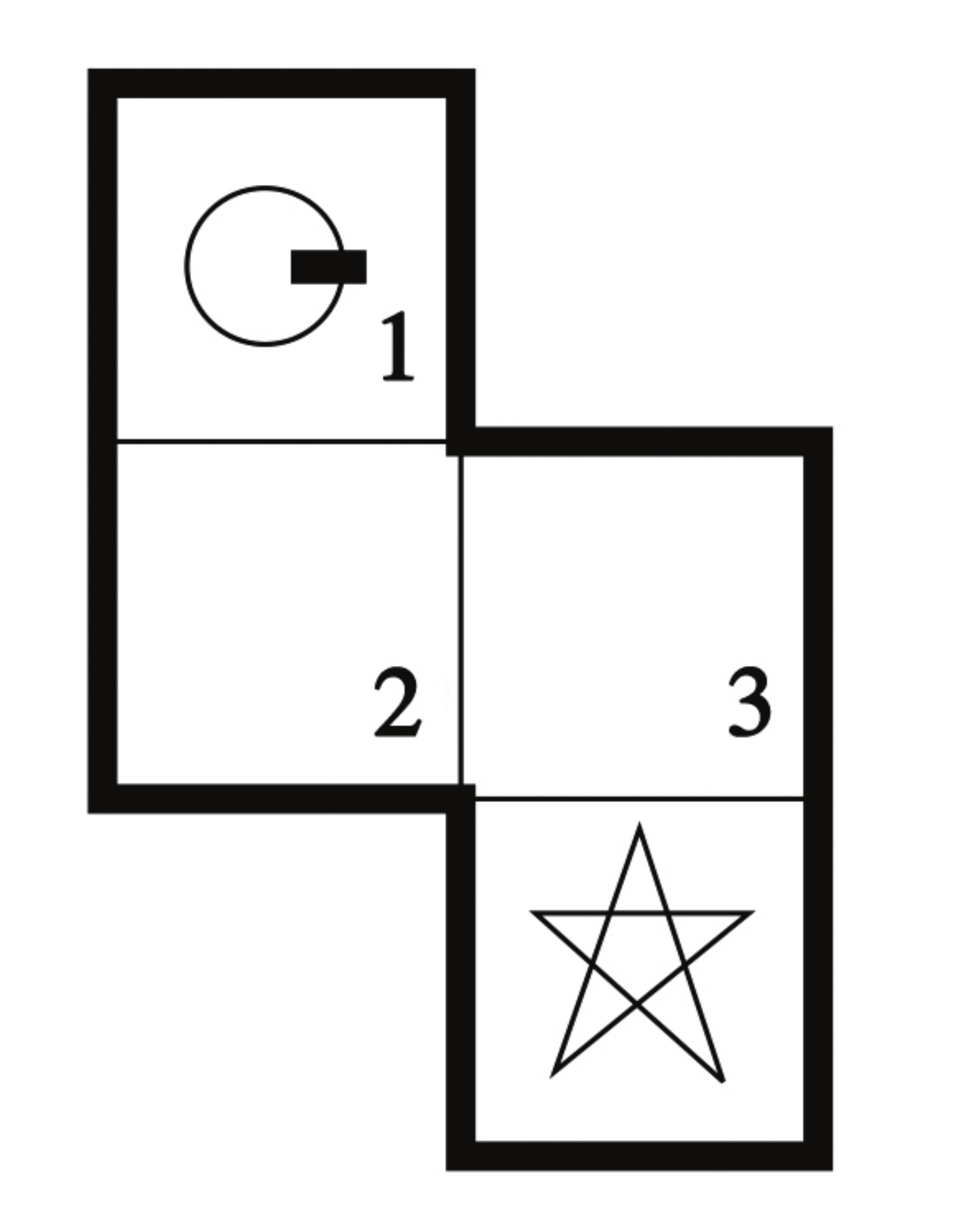}
\caption{The Robot POMDP}
\label{fig:robot}
\end{center}
\end{figure}

\item \emph{Hallway.}
We consider two versions of the Hallway example introduced in in~\cite{LCK95} and used later in~\cite{S04,SS04,BG09}.
The basic idea behind both of the Hallway problems, is that there is an agent 
wandering around some office building. 
It is assumed that the locations have been discretized so there are a finite number of 
locations where the agent could be. The agent has a small finite set of actions it can take, 
but these only succeed with some probability. Additionally, the agent is equipped 
with very short range sensors to provide it only with information about whether it is 
adjacent to a wall. These sensors also have the property that 
they are somewhat unreliable and will sometimes miss a wall or see a wall when
there is none. It can ''see'' in four directions: forward, left, right,  and backward. It is important to note that these observations are relative to the 
current orientation of the agent (N, E, S, W). In these problems the location in the building and the agent's current orientation comprise the states. There is a single goal location, denoted by the star. The actions that can be chosen consists of movements: forward, turn-left, turn-right, turn-around, and no-op
(stay in place). Action forward succeeds with probability $0.8$, leaves the state unchanged with probability $0.05$, moves the agent to the left and rotates the agent to the left with probability $0.05$, similarly with probability $0.05$ the agent moves to the right and is rotated to the right,
with probability $0.025$ the agent is moved back without changing its orientation, and with probability $0.025$ the agent is moved back and is rotated backwards. The action move-left and move-right succeeds with probability $0.7$, and with probability $0.1$ each of the three remaining orientation is reached. Action turn-around succeeds with probability $0.6$,
leaves the state unchanged with probability $0.1$, turns the agent to left or right, each with probability $0.15$. 
The last action no-op leaves the state unchanged with probability $1$. In states where moving forward is impossible the probability mass 
for the impossible next state is collapsed into the probability of not changing the state.
Every move of the agent has a cost of $1$ and the agent starts with uniform probability in all non-goal states. In the smaller Hallway problem there are $61$ states and $22$ observations. In the Hallway2 POMDP there are $94$ states and $17$ observations.

\begin{figure}[h]
  \begin{minipage}[b]{0.40\linewidth} 
    \centering
        \includegraphics[width=7.5cm]{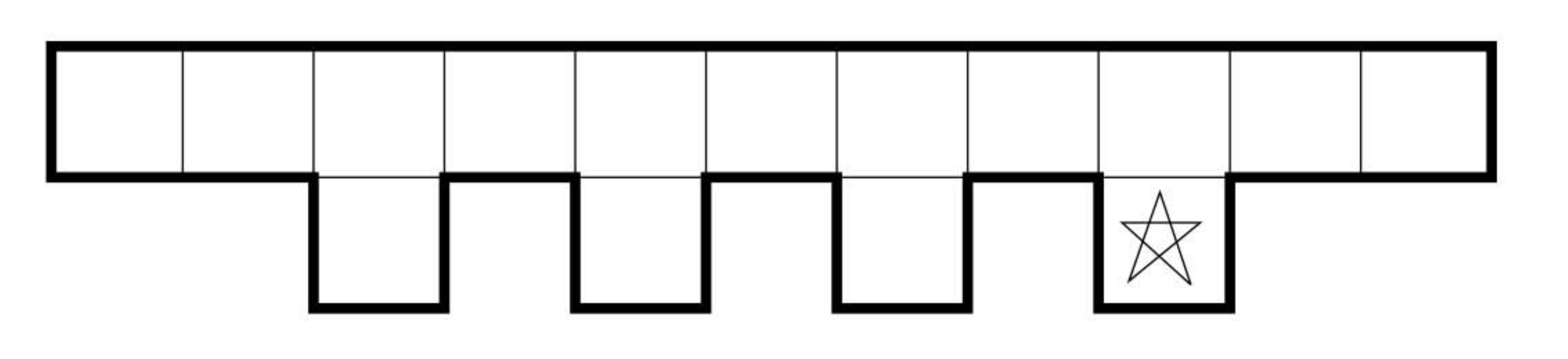}
   \caption{Hallway}
   \label{fig:hallway}
  \end{minipage}
  \begin{minipage}[b]{0.54\linewidth}
    \centering
 \centering
        \includegraphics[width=6cm]{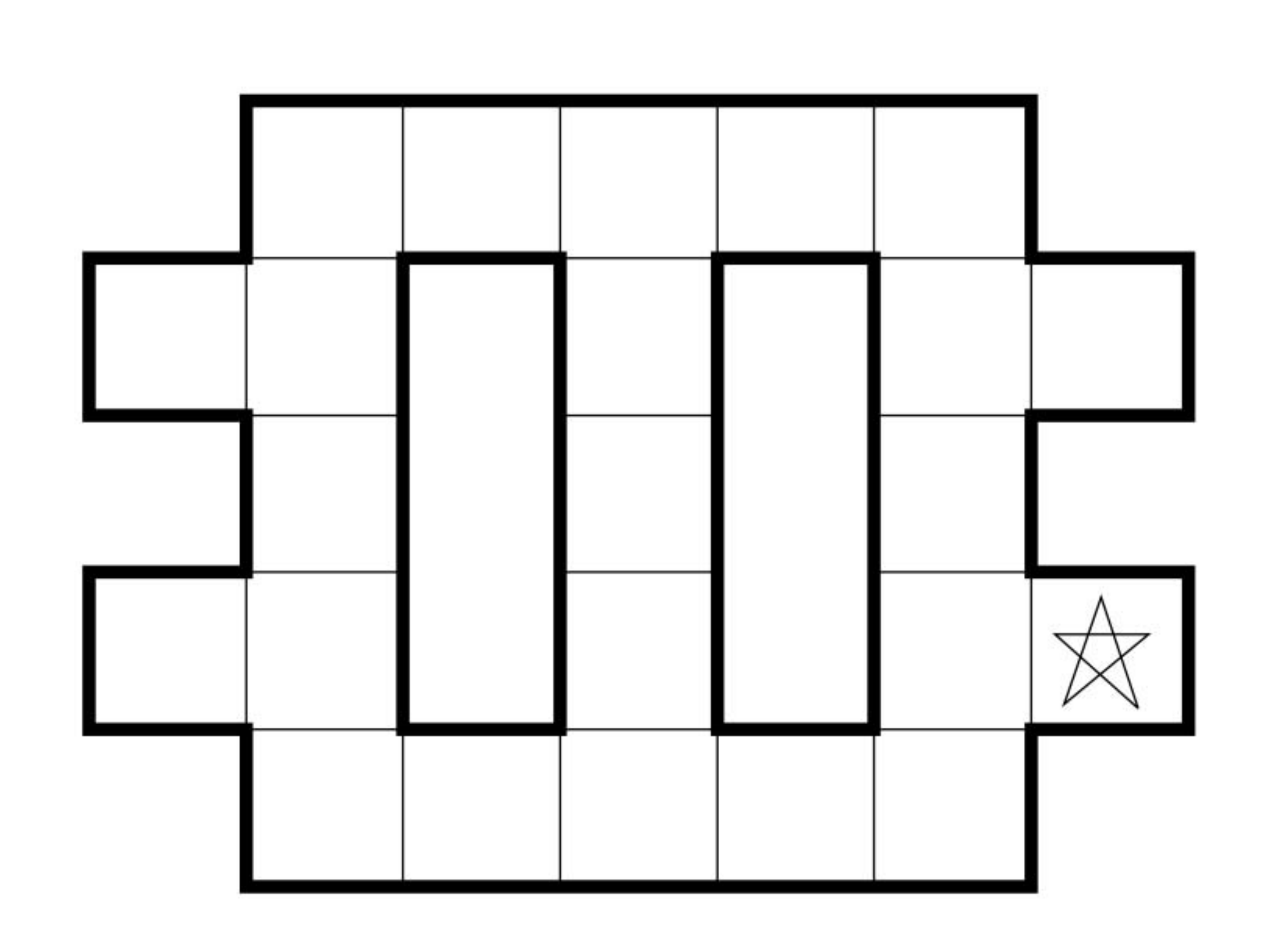}
   \caption{Hallway 2}
   \label{fig:hallway2}
  \end{minipage}
\end{figure}

\begin{figure}
\begin{center}
\includegraphics[width=5cm]{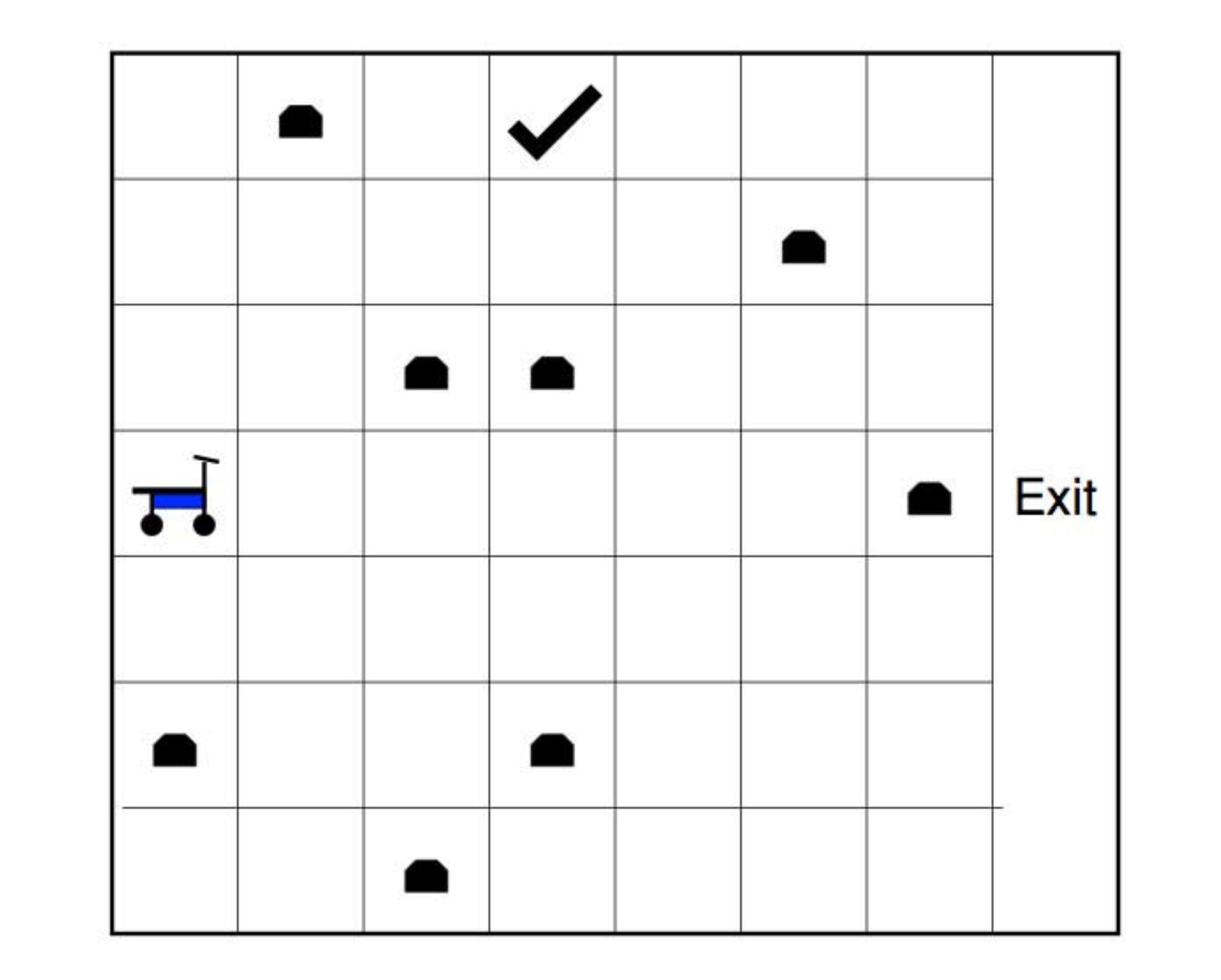}
\caption{RockSample[7,8]}
\label{fig:rockSample}
\end{center}
\end{figure}

\item \emph{RockSample.}
The RockSample problem introduced in~\cite{SS04} and used later in~\cite{BG09} is a scalable problem that models rover science exploration (Figure~\ref{fig:rockSample}). The rover can achieve various costs by sampling rocks in the immediate area, and by continuing its traverse (reaching the exit at the right side of the map). The positions of the rover and the rocks are known, but only some of the rocks have scientific value; we will call these rocks good. Sampling a rock is expensive, so the rover is equipped with a noisy long-range sensor that it can use to help determine whether a rock is good before choosing whether to approach and sample it.
An instance of RockSample with map size $n \times n$ and $k$ rocks is described as RockSample[n,k]. The POMDP model of RockSample[n,k] is as follows. The state space is the cross product of $k+1$ features: 
$\pos = \{(1, 1), (1, 2), . . . , (n, n)\}$, and $k$ binary features $\rockType_i = \{\goodObs, \badObs\}$ that indicate which of the rocks are good. There is an additional terminal state, reached when the rover moves off the right-hand edge of the map. The rover can select from $k + 5$ actions: $\{N, S, E, W, \sample, \checkAction_1 , . . ., \checkAction_k\}$. The first four are deterministic single-step motion actions. The $\sample$ action samples the rock at the rover's current location. If the rock is good, the rover receives a small cost of $1$ and the rock becomes bad (indicating that nothing more can be gained by sampling it). If the rock is bad, it receives a higher cost of $50$. The cost of performing a measurement induces a cost of $1$, attempt to move outside of the map has a cost of $100$. All other moves have a cost of $50$.
Each $\checkAction_i$ action applies the rover's long-range sensor to rock $i$, returning a noisy observation from ${Good, Bad}$. The noise in the long-range sensor reading is determined by the efficiency $\nu$, which decreases exponentially as a function of Euclidean distance from the target. At $\nu = 1$, the sensor always returns the correct value. At $\nu = 0$, it has a 50/50 chance of returning $\goodObs$ or $\badObs$. At intermediate values, these behaviors are combined linearly. The initial belief is that every rock has equal probability of being Good or Bad. All the problems have $2$ observations, and RockSample[4,4] has $257$ states, RockSample[5,5] has $801$ states,
RockSample[5,7] has 3201 states, and RockSample[7,8] has 12545 states.
\end{enumerate}

\smallskip\noindent{\textbf{Acknowledgments.}}
We thank Blai Bonet for helping us with RTDP-Bel.

\bibliographystyle{plain}
\bibliography{diss}

\end{document}